\theoremstyle{plain}   
\newtheorem{proposition}{Proposition}
\newtheorem{theorem}{Theorem}
\newtheorem{lemma}{Lemma}
\theoremstyle{definition}   
\newtheorem{mydef}{Definition}
\newtheorem{remark}{Remark}
\title{Taming the Curse of Dimensionality:\\
  Discrete Integration by Hashing and Optimization}
\author{Stefano Ermon, Carla P.~Gomes\\
  Dept.~of Computer Science\\
  Cornell University, Ithaca NY 14853, U.S.A.\\
  \texttt{\{ermonste,gomes\}@cs.cornell.edu}
  \and
  Ashish Sabharwal\\
  IBM Watson Research Center\\
  Yorktown Heights, NY 10598, U.S.A.\\
  \texttt{ashish.sabharwal@us.ibm.com}
  \and
  Bart Selman\\
  Dept.~of Computer Science,\\
  Cornell University, Ithaca NY 14853, U.S.A.\\
  \texttt{selman@cs.cornell.edu}
  }
\date{February 15, 2013}
\begin{document}

\maketitle

\begin{abstract}
Integration is affected by the curse of dimensionality and quickly
becomes intractable as the dimensionality of the problem grows.  We
propose a randomized algorithm that, with high probability, gives a
constant-factor approximation of a general discrete integral defined
over an exponentially large set. This algorithm relies on solving only
a small number of instances of a discrete combinatorial optimization
problem subject to randomly generated parity constraints used as a
hash function.  As an application, we demonstrate that with a small
number of MAP queries we can efficiently approximate the partition
function of discrete graphical models, which can in turn be used, for
instance, for marginal computation or model selection.
\end{abstract}

\section{Introduction}
Computing integrals in very high dimensional spaces is a fundamental and largely unsolved problem of scientific computation~\cite{dyer1991random,simonovits2003compute,cai2010decidable}, with numerous applications ranging from machine learning and statistics to biology and physics. As the volume grows exponentially in the dimensionality, the problem quickly becomes computationally intractable, a phenomenon traditionally known as the \emph{curse of dimensionality}~\cite{bellman1961adaptive}. 

We revisit the problem of approximately computing discrete integrals, namely weighted sums over (extremely large) sets of items. This problem encompasses several important probabilistic inference tasks, such as computing marginals or normalization constants (partition function) in graphical models, which are in turn the cornerstones for parameter and structure learning~\cite{wainwright2008graphical}. Although we focus on the discrete case, the continuous case can in principle also be addressed, as it can be approximated by numerical integration. There are two common approaches to approximate these large discrete sums: sampling and variational methods. Variational methods~\cite{wainwright2008graphical,jordan1999introduction}, often inspired by statistical physics, are very fast but do not provide guarantees on the quality of the results. Since sampling and counting can be reduced to each other~\cite{jerrum1997markov}, approximate  techniques based on sampling are quite popular, but they suffer from similar issues because the number of samples required to obtain a statistically reliable estimate often grows exponentially in the problem size. Among sampling techniques, Markov Chain Monte Carlo (MCMC) methods are asymptotically accurate, but guarantees for practical applications exist only in a limited number of cases (fast mixing chains)~\cite{jerrum1997markov,madras2002lectures}. They are therefore often used in an heuristic manner. In practice, their performance crucially depends on the choice of the proposal distributions, which often must be domain-specific and expert-designed~\cite{girolami2011riemann,murray2004bayesian}.

We introduce a randomized scheme that computes with high probability ($1 - \delta$ for any desired $\delta > 0$) an approximately correct estimate (within a factor of $1 + \epsilon$ for any desired $\epsilon > 0$) for general weighted sums defined over exponentially large sets of items, such as the set of all possible variable assignments in a discrete probabilistic graphical model. From a computational complexity perspective, the counting problem we consider is complete for the \#P complexity class~\cite{valiant1979complexity}, a set of problems encapsulating the entire Polynomial Hierarchy and believed to be significantly harder than NP.

The key idea is to reduce this \#P problem to a small number (polynomial in the dimensionality) of instances of a (NP-hard) combinatorial optimization problem defined on the same space and subject to randomly generated ``parity'' constraints.
The rationale behind this approach is that although combinatorial optimization is intractable in the worst case, it has witnessed great success in the past 50 years in fields such as Mixed Integer Programming (MIP) and propositional Satisfiability Testing (SAT). Problems such as computing a Maximum a Posteriori (MAP) assignment, although NP-hard, can in practice often be approximated~\cite{DBLP:conf/uai/SontagMGJW08} or solved exactly fairly efficiently~\cite{Park:2002:MCR:2073876.2073922,park2002using}. In fact, modern solvers can exploit structure in real-world problems and prune large portions of the search space, often dramatically reducing the runtime. In contrast, in a \#P counting problem such as computing a marginal probability, one needs to consider contributions of an exponentially large number of items.

Our algorithm, called \textbf{W}eighted-\textbf{I}ntegrals-And-\textbf{S}ums-By-\textbf{H}ashing (\texttt{WISH}), relies on randomized hashing techniques to ``evenly cut'' a high dimensional space. Such hashing was introduced by \citet{valiant1986np} to study the relationship between the number of solutions and the hardness of a combinatorial search. These techniques were also applied by \citet{gomes2006near,mbound} to obtain bounds on the number of solutions for the SAT problem. Our work is more general in that it can handle general weighted sums, such as the ones arising in probabilistic inference for graphical models. Our work is also closely related to recent work by \citet{hazan2012partition}, who obtain a lower bound on the partition function
by taking suitable expectations of a combination of MAP queries over randomly perturbed models.
We improve upon this in two crucial aspects, namely, our estimate is a constant factor approximation of the true partition function (while their bounds have no tightness guarantee), and we provide a concentration result showing that our bounds hold not just in expectation but with high probability with a polynomial number of MAP queries. Note that this is consistent with known complexity results regarding \#P and BPP$^\mathrm{NP}$; see Remark 1 below.

We demonstrate the practical efficacy of the \texttt{WISH} algorithm in the context of computing the partition function of random Clique-structured Ising models, Grid Ising models with known ground truth, and a challenging combinatorial application (Sudoku puzzle) completely out of reach of techniques such as Mean Field and Belief Propagation. We also consider the Model Selection problem in graphical models, specifically in the context of hand-written digit recognition. We show that our ``anytime'' and highly parallelizable algorithm can handle these problems at a level of accuracy and scale well beyond the current state of the art.

\section{Problem Statement and Assumptions}
Let $\Sigma$ be a (large) set of items. Let $w:\Sigma \rightarrow \mathbb{R}^+$ be a non-negative function that assigns a weight to each element of $\Sigma$. We wish to (approximately) compute the total weight of the set, defined as the following discrete integral or ``partition function''
\begin{equation}
\label{Wdef}
W = \sum_{\sigma \in \Sigma} w(\sigma)
\end{equation}
We assume $w$ is given as input and that it can be compactly represented, for instance in a factored form as the product of conditional probabilities tables. Note however that our results are more general and do not rely on a factored representation.

\textbf{Assumption:} We assume to have access to an \emph{optimization oracle} that can solve the following constrained optimization problem
\begin{equation}
\max_{\sigma \in \Sigma} w(\sigma) 1_{\{\mathcal{C}\}} (\sigma)
\end{equation}
where $1_{\{\mathcal{C}\}} : \Sigma \to \{0,1\}$ is an indicator function for a compactly represented subset $\mathcal{C} \subseteq \Sigma$, i.e., $1_{\{\mathcal{C}\}} (\sigma) = 1$ iff $\sigma \in \mathcal{C}$.
For concreteness, we discuss our setup and assumptions in the context probabilistic graphical models, which is our motivating application.

\subsection{Inference in Graphical Models}
We consider a graphical model specified as a factor graph with $N=|V|$ discrete random variables $x_i, i \in V$ where $x_i \in \mathcal{X}_i$. The global random vector $x = \{x_s, s \in V\}$ takes value in the cartesian product $\mathcal{X} = \mathcal{X}_1 \times \mathcal{X}_2 \times \cdots \times \mathcal{X}_N$.
We consider a probability distribution over $x \in \mathcal{X}$ (called \textbf{configurations})
$p(x) = \frac{1}{Z} \prod_{\alpha \in \mathcal{I}} \psi_\alpha(\{x\}_\alpha)$
that factors into potentials or factors $\psi_\alpha:\{x\}_\alpha \mapsto \mathbb{R}^+$, where $\mathcal{I}$ is an index set and $\{x\}_\alpha \subseteq V$ a subset of variables the factor $\psi_\alpha$ depends on, and $Z$ is a normalization constant known as the \textbf{partition function}.

Given a graphical model, we let $\Sigma=\mathcal{X}$ be the set of all possible configurations (variable assignments). Define a weight function $w:\mathcal{X} \rightarrow \mathbb{R}^{+}$ that assigns to each configuration a score proportional to its probability:
$
w(x) = \prod_{\alpha \in \mathcal{I}} \psi_\alpha(\{x\}_\alpha)
$.
$Z$ may then be rewritten as
\begin{equation}
\label{partfunc}
Z = \sum_{x \in \mathcal{X}} w(x) = \sum_{x \in \mathcal{X}} \prod_{\alpha \in \mathcal{I}} \psi_\alpha(\{x\}_\alpha)
\end{equation}
Computing $Z$ is typically intractable because it involves a sum over an exponential number of configurations, and is often the most challenging inference task for many families of graphical models. Computing $Z$ is however needed for many inference and learning tasks, such as evaluating the likelihood of data for a given model, computing marginal probabilities, and parameter estimation~\cite{wainwright2008graphical}.

In the context of graphical models inference, we assume to have access to an optimization oracle that can answer Maximum a Posteriori (MAP) queries, namely, solve the following constrained optimization problem
\[
\arg \max_{x \in \mathcal{X}} p(x \mid \mathcal{C})
\]
that is, we can find the most likely state (and its weight) given some evidence $\mathcal{C}$. This is a strong assumption because MAP inference is known to be an NP-hard problem in general. Notice however that computing $Z$ is a \#P-complete problem, a complexity class believed to be even harder than NP.

\subsection{Quadratures of Integrals}
Suppose we are given a quadrature for a continuous (multidimensional) integral of a function $f:\mathbb{R}^n \rightarrow \mathbb{R}^{+}$ over a high dimensional set $S \subseteq \mathbb{R}^n$
\[
\int_{\mathcal{S}} f(\mathbf{x}) \mathrm{d}\mathbf{x} \approx \sum_{x \in \mathcal{X}} w(x) = W
\]
where $\mathcal{X}$ is some discretization of $S$ (e.g., grid based), and $w(x)$ approximates the integral of $f(\mathbf{x})$ over the corresponding element of volume. In this case, we require a compact representation for $w$ and access to an oracle able to optimize the discretized function, subject to arbitrary constraints. See, e.g., Figure \ref{AlgoVisual}.

For simplicity, in the following we will restrict ourselves to the binary case, i.e., $\Sigma = \mathcal{X} = \{0,1\}^n$. The general multinomial case where the sum is over $\mathcal{X}_1 \times \mathcal{X}_2 \times \cdots \times \mathcal{X}_N$ 
can be transformed into the former case using a binary representation, requiring $\lceil \log_2 |\mathcal{X}_i| \rceil$ bits (binary variables) per dimension $i$.

\section{Preliminaries}
We review some results on the construction and properties of universal hash functions; cf.~\cite{vadhan2011pseudorandomness,goldreich2001randomized}. A reader already familiar with these results may skip to the next section.

\begin{mydef}
\label{pairwisehash}
A family of functions $\mathcal{H} = \{h : \{0,1\}^n \rightarrow \{0,1\}^m\}$ is pairwise independent if the following two conditions hold when $H \leftarrow^R \mathcal{H}$ is a function chosen uniformly at random from $\mathcal{H}$. 1) $\forall x \in \{0,1\}^n$, the random variable $H(x)$ is uniformly distributed in $\{0,1\}^m$. 2) $\forall x_1,x_2 \in \{0,1\}^n$ $x_1 \neq x_2$, the random variables $H(x_1)$ and $H(x_2)$ are independent.
\end{mydef}

A simple way to construct such a function is to think about the family $\mathcal{H}$ of all possible functions $\{0,1\}^n \rightarrow \{0,1\}^m$.  This is a family of not only pairwise independent but \emph{fully} independent functions. However, each function requires $m 2^n$ bits to be represented, and is thus impractical in the typical case where $n$ is large.
On the other hand, \emph{pairwise independent} hash functions can be constructed and represented in a much more compact way as follows; see Appendix for a proof.

%

\begin{proposition}
\label{hashconstruction}
Let $A \in \{0,1\}^{m \times n}$, $b\in \{0,1\}^m$. The family $\mathcal{H}=\{h_{A,b}(x): \{0,1\}^n \rightarrow \{0,1\}^m\}$ where $ h_{A,b}(x)= A x + b \mod{2}$ is a family of pairwise independent hash functions.
\end{proposition}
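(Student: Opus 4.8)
The plan is to verify directly the two defining conditions of pairwise independence (Definition~\ref{pairwisehash}), where the randomness is the uniform and independent choice of every entry of $A \in \{0,1\}^{m\times n}$ and of $b \in \{0,1\}^m$. All arithmetic is over $\mathbb{F}_2$, so I will freely write $x_1 - x_2 = x_1 \oplus x_2$ and treat $h_{A,b}$ as an affine map. The central simplification I would exploit first is that the problem decouples across the $m$ output coordinates: the $i$-th coordinate of $h_{A,b}(x)$ is $\langle a_i, x\rangle \oplus b_i$, where $a_i$ is the $i$-th row of $A$ and $b_i$ the $i$-th entry of $b$, and the pairs $(a_i, b_i)$ are mutually independent across $i$. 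Hence the marginal law of $H(x)$ factors into a product of the $m$ coordinate marginals, and the joint law of $(H(x_1), H(x_2))$ factors into $m$ independent pairs. It therefore suffices to prove both conditions in the scalar case $m = 1$.

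For the scalar case write $h(x) = \langle a, x\rangle \oplus b$ with $a \in \{0,1\}^n$ and $b \in \{0,1\}$ uniform and independent. Condition~1 is immediate: for any fixed $x$, adding the independent uniform bit $b$ makes $\langle a,x\rangle \oplus b$ uniform on $\{0,1\}$ regardless of the value of $\langle a, x\rangle$. For Condition~2, fix $x_1 \neq x_2$ and set $d = x_1 \oplus x_2 \neq 0$. The key step is to change output coordinates from $(h(x_1), h(x_2))$ to $(h(x_1),\, h(x_1)\oplus h(x_2))$, an invertible linear transformation of $\{0,1\}^2$. Now $h(x_1) \oplus h(x_2) = \langle a, d\rangle$ depends only on $a$, while $h(x_1) = \langle a, x_1\rangle \oplus b$ carries the independent bit $b$; I would argue these two quantities are independent and each uniform, whence their joint law is uniform on $\{0,1\}^2$, and transporting back through the invertible map shows $(h(x_1), h(x_2))$ is uniform on $\{0,1\}^2$ as well, which is exactly the required independence.

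The one substantive point, and the place I expect the real work, is showing that $\langle a, d\rangle$ is an unbiased bit and that it is independent of $h(x_1)$. Uniformity of $\langle a, d\rangle$ follows because $d$ has some nonzero coordinate $j$; conditioning on all other entries of $a$, flipping $a_j$ flips the parity, so the two values of $a_j$ split the probability evenly. Independence from $h(x_1)$ is cleanest to see by conditioning on $a$ (hence on $\langle a, d\rangle$): given $a$, the bit $h(x_1) = \langle a, x_1\rangle \oplus b$ is still uniform because $b$ is uniform and independent of $a$, so the conditional law of $h(x_1)$ does not depend on $\langle a, d\rangle$. Everything else is $\mathbb{F}_2$ bookkeeping, and assembling these facts coordinatewise via the decoupling of the first paragraph yields the proposition.
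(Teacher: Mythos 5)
Your proof is correct, but it takes a somewhat different route from the paper's, so a comparison is worth making. The paper proves only the scalar case $m=1$ as a standalone result (Lemma~\ref{lemma:pairwisehash} in the appendix) and then asserts the proposition ``immediately follows''---the implicit step being exactly the coordinate-wise decoupling you spell out in your first paragraph, namely that the rows $(a_i,b_i)$ are independent across $i$, so the joint law of $(H(x_1),H(x_2))$ factors into $m$ independent scalar pairs. For the scalar case itself the two arguments genuinely diverge. The paper partitions the coordinate indices by the supports $S_1,S_2$ of $x_1,x_2$ and writes $H(x_1)=R(S_1\cap S_2)\oplus R(S_1\setminus S_2)\oplus b$ and $H(x_2)=R(S_1\cap S_2)\oplus R(S_2\setminus S_1)\oplus b$, where the partial sums $R(\cdot)$ over disjoint index sets and $b$ are mutually independent, and reads off joint uniformity. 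You instead pass through the invertible map $(u,v)\mapsto(u,u\oplus v)$: the difference $h(x_1)\oplus h(x_2)=\langle a,d\rangle$ with $d=x_1\oplus x_2\neq 0$ is unbiased by a single-coordinate-flip argument, and is independent of $h(x_1)$ because $b$ makes $h(x_1)$ conditionally uniform given $a$. The two proofs isolate the same underlying structure (randomness in the difference versus the randomness $b$ that uniformizes $h(x_1)$), but yours is tighter on an edge case the paper glosses over: joint uniformity in the paper's decomposition requires noting that at least one of $S_1\setminus S_2$, $S_2\setminus S_1$ is nonempty (the other may be empty, making that partial sum identically zero and hence not uniform), whereas your argument needs only $d\neq 0$ and handles all configurations of the supports uniformly.
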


The space $\mathcal{C} = \{ x: h_{A,b}(x) = p \}$ has a nice geometric interpretation as the translated nullspace of the random matrix $A$. It is therefore a finite dimensional vector space, with operations defined on the field $\mathbb{F}(2)$ (arithmetic modulo $2$). We will refer to constraints in the form $A x = b \mod{2}$ as \textbf{parity constraints}, as they can be rewritten in terms of XORs operations as $A_{i1} x_1 \oplus A_{i2} x_2 \oplus \cdots \oplus A_{in} x_n = b_i$.

\section{The {\tt WISH} Algorithm}
We start with the intuition behind our algorithm to approximate the value of $W$ called \textbf{W}eighted-\textbf{I}ntegrals-And-\textbf{S}ums-By-\textbf{H}ashing (\texttt{WISH}).

\begin{figure*}[tb]
    	\includegraphics[width=0.25\textwidth]{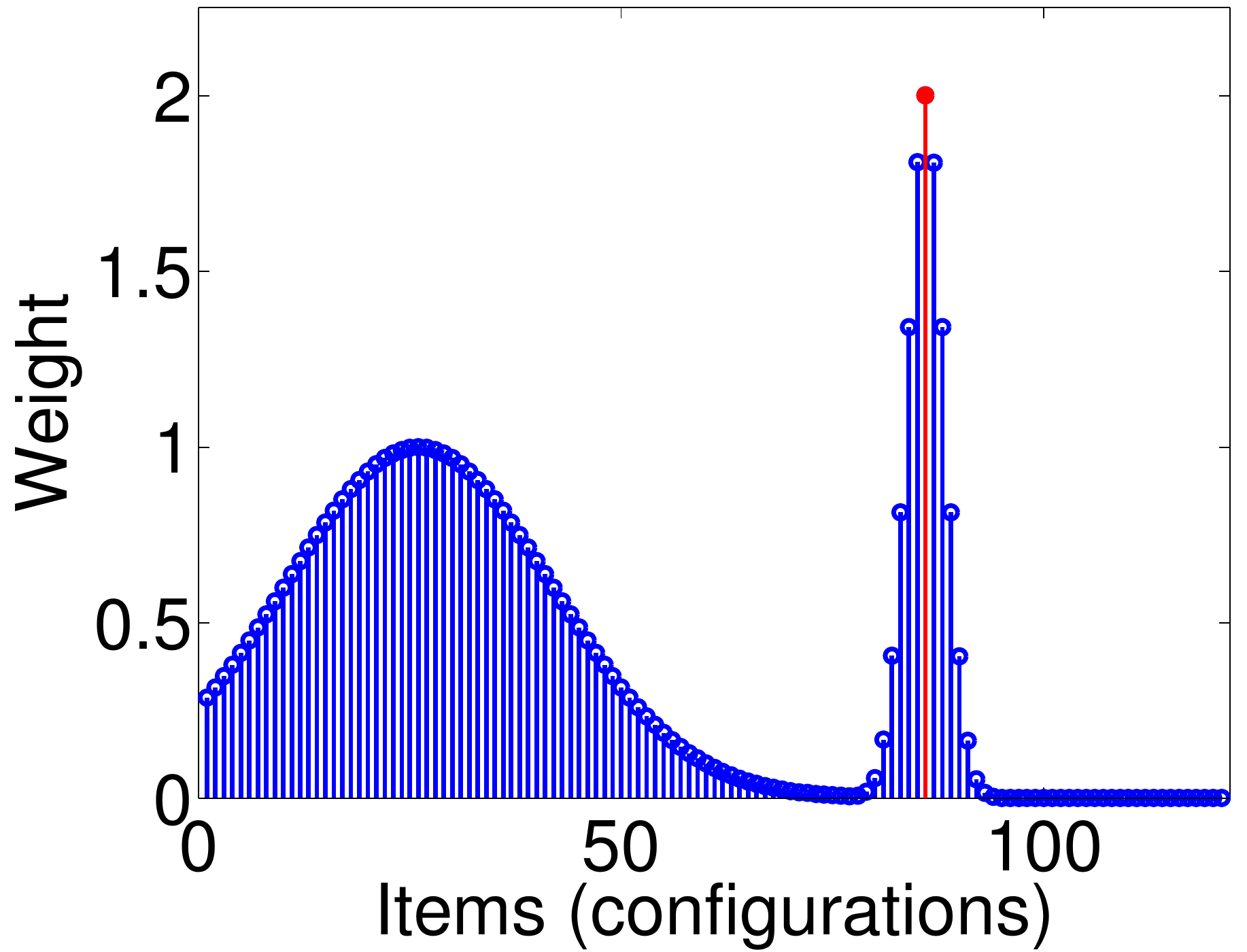}
    	\hfill
    	\includegraphics[trim=40 0 00 0, clip, width=0.23\textwidth]{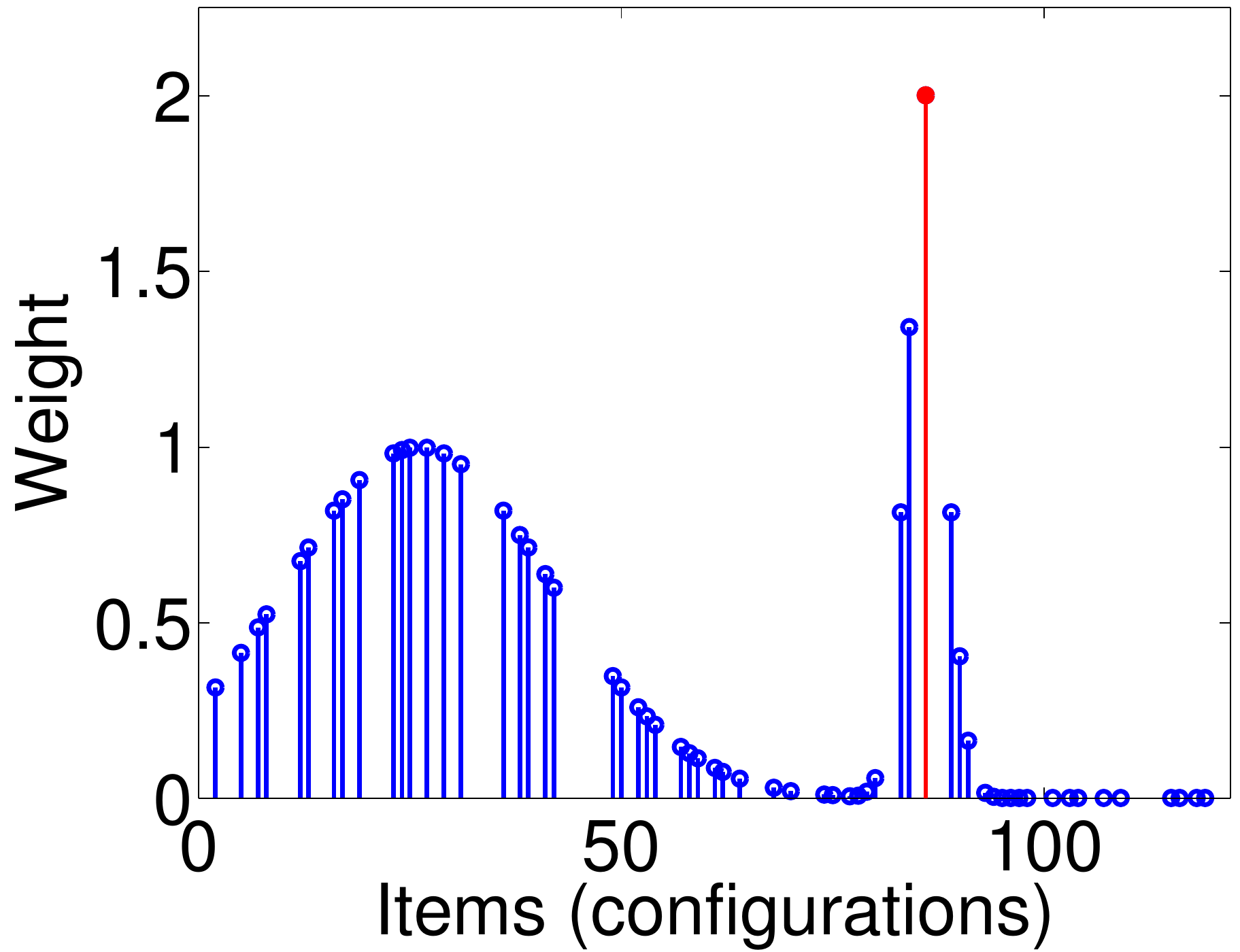}
    	\hfill
	\includegraphics[trim=40 0 00 0, clip, width=0.23\textwidth]{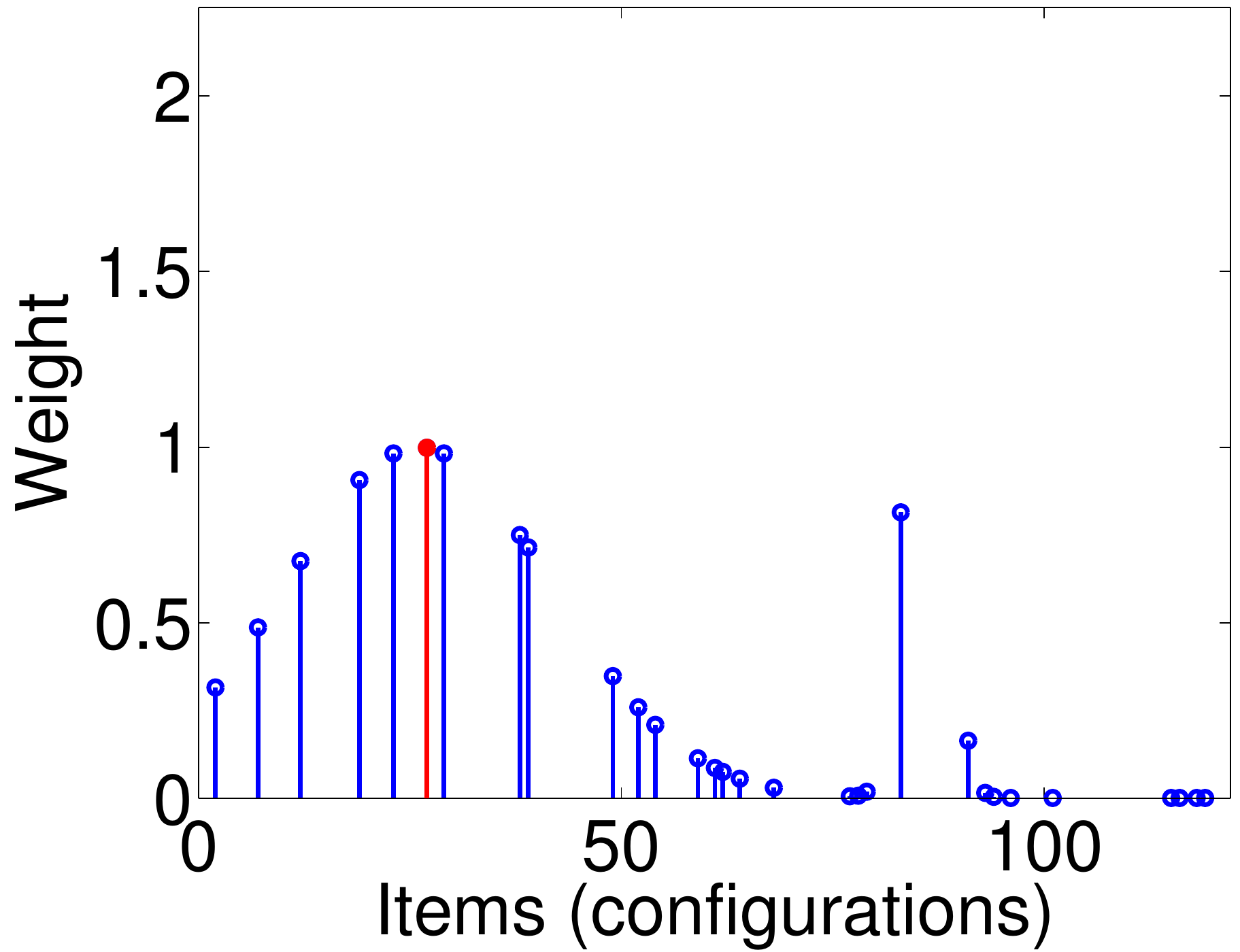}
    	\hfill
   	\includegraphics[trim=40 0 00 0, clip, width=0.23\textwidth]{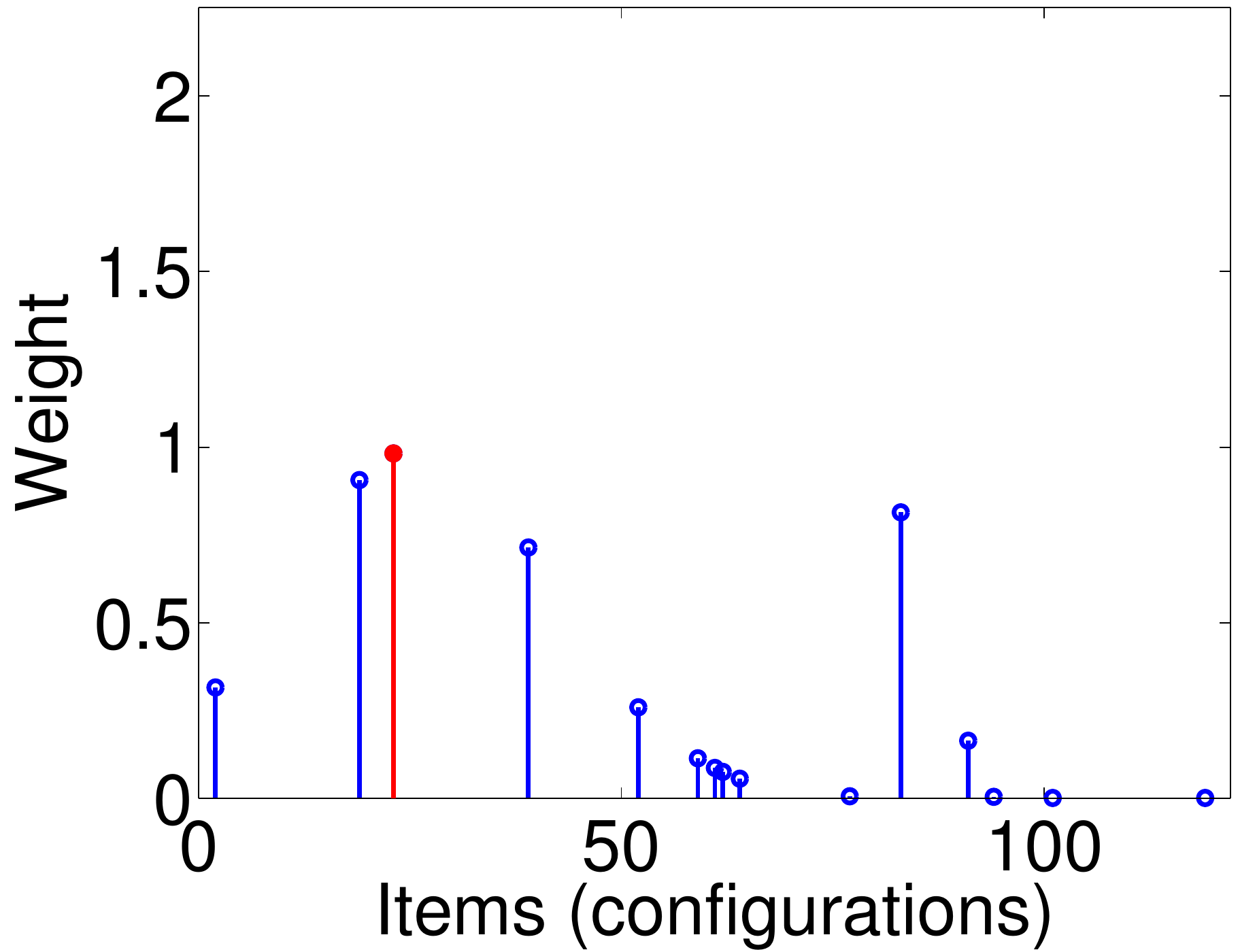}
  	\caption{Visualization of the ``thinning'' effect of random parity constraints, after adding 0, 1, 2, and 3 parity constraints. Leftmost plot shows the original function to integrate. Constrained optimal solution in red.}
     \label{AlgoVisual}
\end{figure*}

Computing $W$ as defined in Equation (\ref{Wdef}) is challenging because the sum is defined over an exponentially large number of items, i.e., $|\Sigma|=2^n$ when there are $n$ binary variables. Let us define the \textbf{tail distribution} of weights as $G(u) \triangleq |\{\sigma \mid w(\sigma) \geq u\}|$. Note that $G$ is a non-increasing step function, changing values at no more than $2^n$ points. Then $W$ may be rewritten as $\int_{\mathbb{R}^+} G(u) \mathrm{d}u$, i.e., the total \emph{area} $A$ under the $G(u)$ vs.\ $u$ curve. One way to approximate $W$ is to (implicitly) divide this area $A$ into either \emph{horizontal} or \emph{vertical} slices (see Figure~\ref{fig:tailcurve}), approximate the area in each slice, and sum up.

Suppose we had an efficient procedure to estimate $G(u)$ given any $u$. Then it is not hard to see that one could create enough slices by dividing up the x-axis, estimate $G(u)$ at these points, and estimate the area $A$ using quadrature. However, the natural way of doing this to any degree of accuracy would require a number of slices that grows at least logarithmically with the weight range on the x-axis, which is undesirable.

Alternatively, one could split the y-axis, i.e., the $G(u)$ value range $[0,2^n]$, at geometrically growing values $1, 2, 4, \cdots, 2^n$, i.e., into bins of sizes $1, 1, 2, 4, \cdots, 2^{n-1}$. Let $b_0 \geq b_1 \geq \cdots \geq b_n$ be the weights of the configurations at the split points. In other words, $b_i$ is the $2^i$-th quantile of the weight distribution. Unfortunately, despite the monotonicity of $G(u)$, the area in the horizontal slice defined by each bin is difficult to bound, as $b_i$ and $b_{i+1}$ could be arbitrarily far from each other. However, the area in the \emph{vertical} slice defined by $b_i$ and $b_{i+1}$ must be bounded between $2^i (b_i - b_{i+1})$ and $2^{i+1} (b_i - b_{i+1})$, i.e., within a factor of 2. Thus, summing over the lower bound for all such slices and the left-most slice, the total area $A$ must be within a factor of 2 of $\sum_{i=0}^{n-1} 2^i (b_i - b_{i+1}) + 2^n b_n = b_0 + \sum_{i=1}^n 2^{i-1} b_i$. Of course, we don't know $b_i$. But if we could approximate each $b_i$ within a factor of $p$, we would get a $2p$-approximation to the area $A$, i.e., to $W$.

\begin{figure}[tb]
    	\hfill
    	\includegraphics[width=0.35\textwidth]{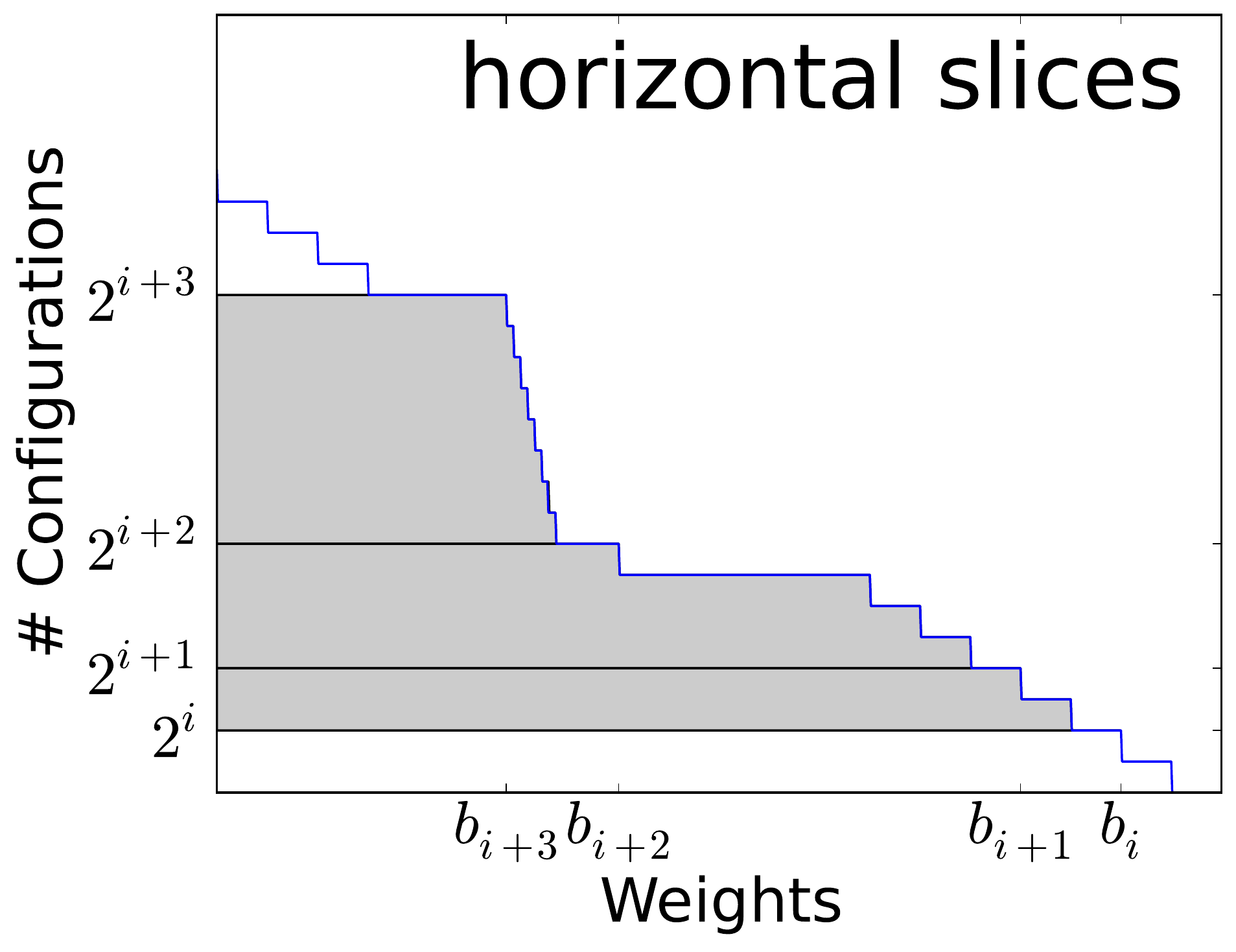}
    	\hfill
    	\includegraphics[width=0.35\textwidth]{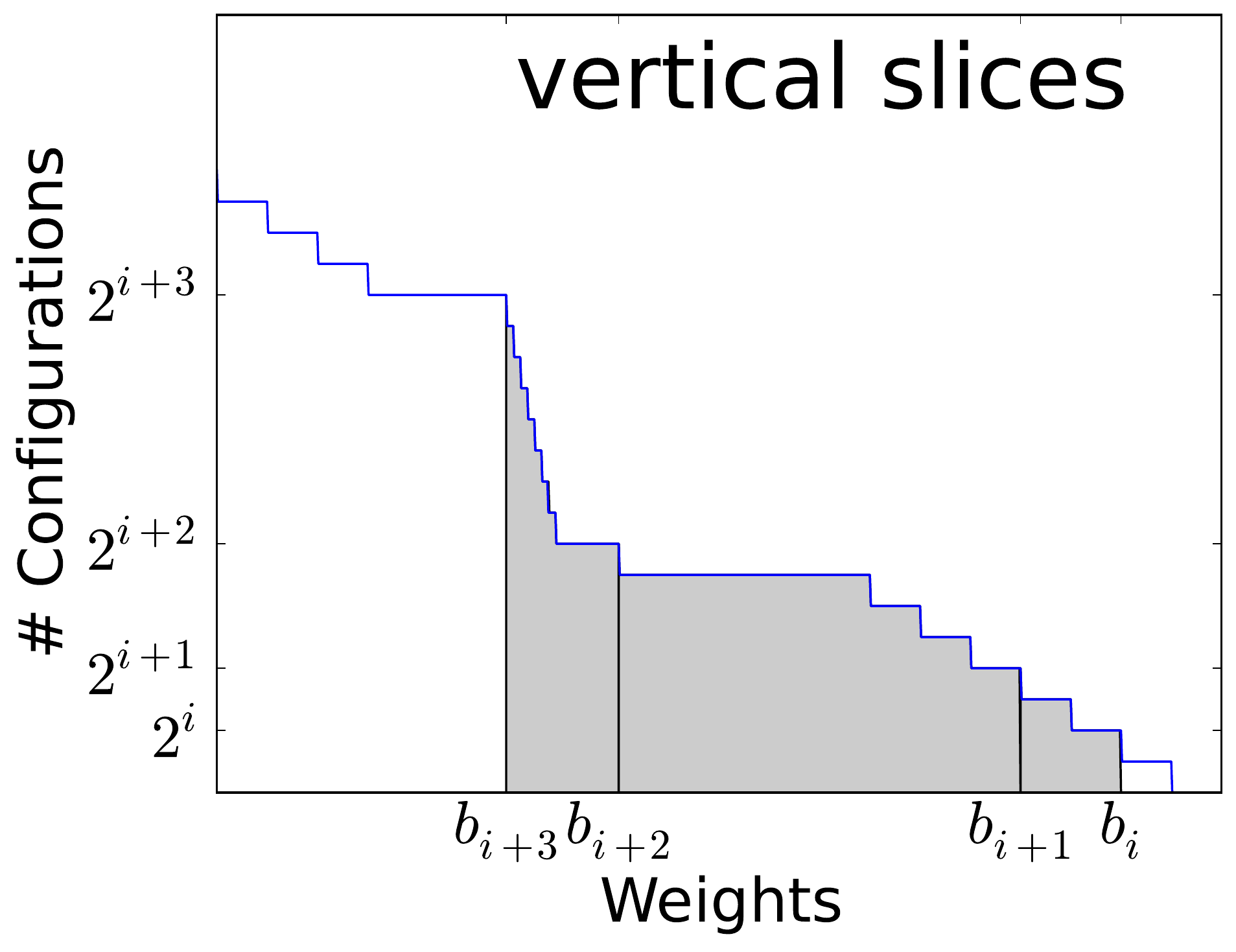}
    	\hfill
     \label{fig:tailcurve}
	\caption{Horizontal vs.\ vertical slices for integration.}
\end{figure}

\texttt{WISH} provides an efficient way to realize this strategy, using a combination of randomized hash functions and an optimization oracle 
to approximate the $b_i$ values with high probability. Note that this method allows us to compute the partition function $W$ (or the area $A$) by estimating weights $b_i$ at $n+1$ carefully chosen points, which is ``only'' an optimization problem.

The key insight to compute the $b_i$ values is as follows. Suppose we apply to configurations in $\Sigma$ a randomly sampled pairwise independent hash function with $2^m$ buckets and use an optimization oracle to compute the weight $w_m$ of a \emph{heaviest} configuration in a fixed (arbitrary) bucket. If we repeat this process $T$ times and consistently find that $w_m \geq w^*$, then we can infer by the properties of hashing that at least $2^m$ configurations (globally) are likely to have weight at least $w^*$. By the same token, if there were in fact at least $2^{m+c}$ configurations of a heavier weight $\hat{w} > w^*$ for some $c > 0$, there is a good chance that the optimization oracle will find $w_m \geq \hat{w}$ and we would not underestimate the weight of the $2^m$-th heaviest configuration. As we will see shortly, this process, using pairwise independent hash functions to keep variance low, allows us to estimate $b_i$ accurately with only $T = \mathrm{O}(\ln n)$ samples.

The pseudocode of \texttt{WISH} is shown as Algorithm~\ref{algo}. It is parameterized by the weight function $w$, the dimensionality $n$, a correctness parameter $\delta > 0$, and a constant $\alpha > 0$. Notice that the algorithm requires solving only $\Theta(n \ln n \ln 1/\delta)$ optimization instances (MAP inference) to compute a sum defined over $2^n$ items. In the following section, we formally prove that the output is a constant factor approximation of $W$ with probability at least $1-\delta$ (probability over the choice of hash functions). Figure~\ref{AlgoVisual} shows the working of the algorithm. As more and more random parity constraints are added in the outer loop of the algorithm (``levels'' increasing from $1$ to $n$), the configuration space is (pairwise-uniformly) thinned out and the optimization oracle selects the heaviest (in red) of the surviving configurations. The final output is a weighted sum over the median of $T$ such modes obtained at each level.


\begin{algorithm}[tb]
\caption{WISH $(w:\Sigma\to\mathbb{R}^+, n = \log_2 |\Sigma|, \delta, \alpha)$ 
}
\label{algo}
\begin{algorithmic}
\STATE $T \leftarrow \left\lceil \frac{\ln \left(1/\delta\right)}{\alpha} \ln n \right\rceil$
\FOR {$i=0, \cdots, n$}
\FOR {$t=1, \cdots, T$}
\STATE Sample hash function $h^i_{A,b}:\Sigma \rightarrow \{0,1\}^i$, i.e. sample uniformly $A \in \{0,1\}^{i\times n}$, $b \in \{0,1\}^i$
\STATE $w_i^t \leftarrow \max_\sigma w(\sigma)$ subject to $A \sigma =b \mod{2}$
\ENDFOR
\STATE $M_i \leftarrow \mathrm{Median}(w_i^1, \cdots, w_i^T)$
\ENDFOR
\STATE Return $M_0 + \sum_{i=0}^{n-1} M_{i+1} 2^i$
\end{algorithmic}
\end{algorithm}

\begin{remark}
The parity constraints  $A \sigma =b \bmod{2}$ do not change the worst-case complexity of an NP-hard optimization problem. Our result is thus consistent with the fact that \#P can be approximated in BPP$^\mathrm{NP}$, that is, one can approximately count the number of solutions with a randomized algorithm and a polynomial number of queries to an NP oracle~\cite{goldreich2001randomized}.
\end{remark}

\begin{remark}
Although the parity constraints we impose are simple linear equations over a field, they can make the optimization harder. For instance, finding a configuration with the smallest Hamming weight satisfying a set of parity constraints is known to be NP-hard, i.e. equivalent to computing the minimum distance of a parity code~\cite{berlekamp1978inherent,Vardy:1997:ACC:258533.258559}. On the other hand, most low density parity check codes can be solved extremely fast in practice using heuristic methods such as message passing.
\end{remark}

\begin{remark}
Each of the optimization instances can be solved independently, allowing natural massive \textbf{parallelization}. We will also discuss how the algorithm can be used in an \textbf{anytime} fashion, and the implications of obtaining suboptimal solutions.
\end{remark}


\section{Analysis}
Since many configurations can have identical weight, it will help for the purposes of the analysis to fix, w.l.o.g., a weight-based ordering of the configurations, and a natural partition of the $|\Sigma| = 2^n$ configurations into $n+1$ bins that the ordering induces.
\begin{mydef}
\label{quantilesdef}
Fix an ordering $\sigma_i, 1 \leq i \leq 2^n,$ of the configurations in $\Sigma$ such that for $1 \leq j < 2^n$, $w(\sigma_j) \geq w(\sigma_{j+1})$. For $i \in \{0,1,\cdots,n\}$, define $b_i \triangleq w(\sigma_{2^i})$. Define a special \emph{bin} $B \triangleq \{\sigma_1\}$ and, for $i \in \{0, 1, \cdots, n-1\}$, define \emph{bin} $B_i \triangleq \{\sigma_{2^i+1}, \sigma_{2^i+2}, \cdots, \sigma_{2^{i+1}}\}$.
\end{mydef}
Note that bin $B_i$ has precisely $2^i$ configurations. Further, for all $\sigma \in B_i$, it follows from the definition of the ordering that $w(\sigma) \in [b_{i+1},b_i]$. This allows us to bound the sum of the weights of configurations in $B_i$ (the ``horizontal'' slices) between $2^i b_{i+1}$ and $2^i b_i$.

\subsection{Estimating the Total Weight}
\label{sec:totalweight}
Our main theorem is that Algorithm~\ref{algo} provides a constant factor approximation to the partition function.

\begin{theorem}
\label{maintheorem}
For any $\delta > 0$ and positive constant $\alpha \leq 0.0042$, Algorithm \ref{algo} makes $\Theta(n \ln n \ln 1/\delta)$ MAP queries and, with probability at least $(1-\delta)$, outputs a 16-approximation of $W = \sum_{\sigma \in \Sigma} w(\sigma)$.
\end{theorem}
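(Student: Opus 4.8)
The plan is to decouple the argument into a \emph{deterministic} part, which shows that the ``ideal'' estimate built from the exact quantiles $b_i$ is a factor-$2$ approximation of $W$, and a \emph{probabilistic} part, which shows that the medians $M_i$ produced by the algorithm track the quantiles $b_i$ up to a bounded shift in the index, with total failure probability $\delta$. For the deterministic part, set $L \triangleq b_0 + \sum_{i=1}^{n} 2^{i-1} b_i$. Using the bins of Definition~\ref{quantilesdef}, where $|B_i| = 2^i$ and every $\sigma \in B_i$ has $w(\sigma) \in [b_{i+1}, b_i]$, I would bound each horizontal slice $\sum_{\sigma \in B_i} w(\sigma) \in [2^i b_{i+1}, 2^i b_i]$ and sum to get $L \le W \le 2L$; the upper bound follows because $2L - (b_0 + \sum_{i=0}^{n-1} 2^i b_i)$ telescopes to exactly $2^n b_n \ge 0$. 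This is the ``factor $2$'' in the final constant.

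For the probabilistic part, fix a level $i$ and a trial. Since $\{\sigma : A\sigma = b \bmod 2\} = \{\sigma : h_{A,b}(\sigma) = 0\}$, Proposition~\ref{hashconstruction} gives that the indicators $Y_\sigma = 1[A\sigma = b]$ satisfy $\mathbb{E}[Y_\sigma] = 2^{-i}$ and are pairwise independent. For $S \subseteq \Sigma$ let $Z_S = \sum_{\sigma \in S} Y_\sigma$, so $\mathbb{E}[Z_S] = |S|\,2^{-i}$ and, crucially, pairwise independence kills the covariances and yields $\mathrm{Var}(Z_S) \le \mathbb{E}[Z_S]$. I would then prove two one-sided tail bounds with the single constant $c = 3$. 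For the \emph{upper} bound, take $S$ to be the top $2^{i-c}$ configurations: if none of them lands in the bucket ($Z_S = 0$) then $w_i^t \le w(\sigma_{2^{i-c}+1}) \le b_{i-c}$, so by Markov $\Pr[w_i^t > b_{i-c}] \le \mathbb{E}[Z_S] = 2^{-c}$. For the \emph{lower} bound, take $S'$ to be the top $2^{i+c}$ configurations (all of weight $\ge b_{i+c}$): here $\mathbb{E}[Z_{S'}] = 2^{c}$, and Chebyshev gives $\Pr[w_i^t < b_{i+c}] \le \Pr[Z_{S'}=0] \le \mathrm{Var}(Z_{S'})/\mathbb{E}[Z_{S'}]^2 \le 2^{-c}$. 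Boundary cases ($i-c<0$ or $i+c>n$) are handled by the conventions $b_{<0} = b_0$, $b_{>n}=0$, under which both tail probabilities are trivially $0$.

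With $c=3$ each tail probability is at most $1/8 < 1/2$, which is exactly what the median-of-$T$ step needs: $M_i > b_{i-c}$ only if at least $T/2$ of the $T$ independent trials exceed $b_{i-c}$, an event of probability at most $\Pr[\mathrm{Bin}(T, 1/8) \ge T/2]$, and symmetrically on the lower side. A Chernoff/Hoeffding bound makes each of these at most $\exp(-\gamma T)$ for an absolute constant $\gamma$. Union-bounding over the $\le 2(n+1)$ (level, side) pairs and forcing the total below $\delta$ requires $\gamma T \ge \ln(2(n+1)/\delta)$, which is satisfied by $T = \lceil \frac{\ln(1/\delta)}{\alpha}\ln n\rceil$ precisely when $\alpha$ is a small enough absolute constant; verifying that $\alpha \le 0.0042$ suffices uniformly in $n$ and $\delta$ is the most delicate bookkeeping step and the main obstacle, since this is where the abstract $\gamma$ must be pinned down from the $p = 1/8$ Chernoff exponent.

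On the resulting good event, $b_{i+c} \le M_i \le b_{i-c}$ for all $i$ (with the boundary conventions), and moreover $M_0 = b_0$ exactly since level $0$ imposes no constraints. I would then substitute into the output $\hat{W} = M_0 + \sum_{i=0}^{n-1} 2^i M_{i+1}$ and reindex: replacing $M_{i+1}$ by $b_{i+1-c}$ and by $b_{i+1+c}$ shifts the index by $\pm c$, rescaling the geometric sum by $2^{\pm c}$, and carrying the boundary terms carefully (using $M_0 = b_0$) gives $2^{-c} L \le \hat{W} \le (1+2^{c}) L$. Combining with $L \le W \le 2L$ yields $W/2^{c+1} \le \hat{W} \le (1+2^c)\,W$, i.e.\ $W/16 \le \hat{W} \le 9W$ for $c = 3$, so $\hat{W}$ is within a factor $16$ of $W$; together with the $(n+1)\,T = \Theta(n \ln n \ln 1/\delta)$ MAP queries (one per level per trial) this is exactly the claim.
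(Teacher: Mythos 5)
Your proof is correct and reaches the stated bound, and its architecture matches the paper's: the factor-2 sandwich $L \le W \le 2L$ over the quantile bins, pairwise-independent hashing with second-moment concentration per trial, median-of-$T$ amplification via a Chernoff bound, a union bound over levels, and reindexing of the output sum are exactly the paper's steps. You execute two of these steps genuinely differently, though, and the differences are instructive. First, your per-trial tails are asymmetric---Markov for the upper side ($\Pr[w_i^t > b_{i-c}] \le \mathbb{E}[Z_S] = 2^{-c}$) and the second-moment/Chebyshev bound for the lower side---run at $c=3$, giving per-side failure probability $1/8$ and a Chernoff margin of $3/8$ (exponent $\gamma = 9/32$). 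The paper's Lemma 1 instead applies Chebyshev with $k=3/2$ on \emph{both} sides at $c=2$, obtaining per-side success probability only $5/9$, i.e.\ margin $1/18$; that tiny margin is precisely the origin of the constant $0.0042 \approx 2(1/18)^2 \ln 2$, so relative to the requirement $\alpha \le 0.0042$ your constants have enormous slack, and the step you flag as ``the main obstacle'' is actually the easiest part of your version. Second, you bypass the paper's Lemma 2 (which shows $U' \le 2^{2c} L'$ for the interval $[L',U']$ containing both $W$ and the output) by sandwiching the output directly against $L$, namely $2^{-c}L \le \widehat{W} \le (1+2^c)L$ together with $W \in [L, 2L]$. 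This accounting yields a ratio of roughly $2^{c+1}$ instead of $2^{2c}$, which is exactly what lets you afford $c=3$; in fact, run at $c=2$ (where your Markov/Chebyshev bounds still give per-side failure $1/4 < 1/2$) your combination proves an 8-approximation, strictly sharper than the stated 16. One blemish you share verbatim with the paper: making $n\exp(-\gamma T)$ (or $2(n+1)\exp(-\gamma T)$) fall below $\delta$ with $T \approx \alpha^{-1}\ln n \ln(1/\delta)$ compares $\ln n \ln(1/\delta)$ against $\ln n + \ln(1/\delta)$, which genuinely requires $\delta$ bounded away from $1$ and $n$ not too small; since the paper's own proof glosses over the same point, it is not a gap attributable to your argument.
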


The proof relies on two intermediate results whose proofs may be found in the Appendix.

\begin{lemma}
\label{lemma:icdf}
Let $M_i=\mathrm{Median}(w_i^1, \cdots, w_i^T)$ be defined as in Algorithm \ref{algo} and $b_i$ as in Definition \ref{quantilesdef}. Then, for all $c \geq 2$, there exists an $\alpha^*(c) > 0$ such that for $0 < \alpha \leq \alpha^*(c)$,
\[
\Pr\left[ M_i \in [b_{\min\{i+c,n\}},b_{\max\{i-c,0\}}]\right] \geq 1-\exp(-\alpha T)
\]
\end{lemma}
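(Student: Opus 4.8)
The plan is to control the two-sided failure event by splitting it into an \emph{overestimation} event $\{M_i > b_{\max\{i-c,0\}}\}$ and an \emph{underestimation} event $\{M_i < b_{\min\{i+c,n\}}\}$, bounding each by a union bound. For a single trial $t$, the feasible set $\{\sigma : A\sigma = b \bmod 2\}$ is exactly the bucket $h^i_{A,b}(\sigma)=0$ of a pairwise independent hash (Proposition \ref{hashconstruction}), so if for a weight threshold $u$ I set $Y_\sigma \triangleq 1_{\{A\sigma = b\}}$, the $Y_\sigma$ are Bernoulli with $\Pr[Y_\sigma=1]=2^{-i}$ and are pairwise independent across distinct $\sigma$ (Definition \ref{pairwisehash}). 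Writing $X_u \triangleq \sum_{\sigma: w(\sigma)\geq u} Y_\sigma$ for the number of surviving configurations of weight at least $u$, a single trial satisfies $w_i^t \ge u$ iff $X_u \ge 1$. The whole argument then reduces to first- and second-moment control of $X_u$ at $u = b_{i-c}$ and $u = b_{i+c}$, followed by amplification over the $T$ independent trials.

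To bound overestimation I would fix $u = b_{\max\{i-c,0\}}$. By Definition \ref{quantilesdef} there are strictly fewer than $2^{\,i-c}$ configurations of weight exceeding $b_{i-c}$, so for $X^{>}_u$ counting survivors of weight strictly above $u$ we have $E[X^{>}_u] < 2^{\,i-c}\,2^{-i} = 2^{-c}$; a Markov bound then gives $\Pr[w_i^t > b_{i-c}] \le E[X^{>}_u] < 2^{-c} \le 1/4$ for $c \ge 2$. When $i-c<0$ the threshold is clamped to $b_0 = w(\sigma_1)$, the global maximum, and $M_i \le b_0$ holds deterministically, so this tail is vacuous at the boundary.

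To bound underestimation I would fix $u = b_{\min\{i+c,n\}}$. When $i+c\le n$ there are at least $2^{\,i+c}$ configurations of weight $\ge b_{i+c}$, so $\mu \triangleq E[X_u] \ge 2^c$. Here pairwise independence is essential: since all covariances vanish, $\mathrm{Var}(X_u) = \sum_\sigma \mathrm{Var}(Y_\sigma) \le \mu$, and Chebyshev's inequality yields $\Pr[w_i^t < b_{i+c}] = \Pr[X_u = 0] \le \mathrm{Var}(X_u)/\mu^2 \le 1/\mu \le 2^{-c} \le 1/4$. This second-moment step is the heart of the lemma and the reason compact pairwise independent hashing suffices. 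The delicate boundary is $i+c>n$, where the threshold is clamped to $b_n$ (the minimum weight) and $\mu$ can be as small as $1$ (at $i=n$); there pure Chebyshev degrades, and one instead uses that a uniformly random parity system $A\sigma=b$ over $\{0,1\}^n$ is solvable — equivalently $X_{b_n}\ge1$ — with probability bounded away from $1/2$ uniformly in $n$, keeping the per-trial failure strictly below $1/2$.

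Finally I would amplify over trials. Because the $T$ hash functions are sampled independently, the indicators $\{w_i^t > b_{i-c}\}_t$ and $\{w_i^t < b_{i+c}\}_t$ are each i.i.d.\ Bernoulli with parameter $q < 1/2$, and $M_i$ can exceed $b_{i-c}$ (resp.\ fall below $b_{i+c}$) only if at least $T/2$ of these trials are ``bad''. A Chernoff bound then gives $\Pr[M_i > b_{i-c}] \le \exp(-\gamma(c)\,T)$ and $\Pr[M_i < b_{i+c}] \le \exp(-\gamma(c)\,T)$ for a rate $\gamma(c)>0$ depending only on $q \le 2^{-c}$, and choosing $\alpha^*(c)$ small enough that $2\exp(-\gamma(c)T) \le \exp(-\alpha^*(c)T)$ finishes the union bound. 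I expect the main obstacle to be the underestimation analysis: pushing the second-moment bound through with only pairwise independence, and in particular the top levels $i \approx n$, where the expected number of surviving heavy configurations is a small constant and one must appeal to the solvability probability of random parity constraints rather than to Chebyshev alone.
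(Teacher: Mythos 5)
Your proposal follows essentially the same route as the paper's proof: split the failure event into per-trial overestimation and underestimation events, use pairwise independence of $h^i_{A,b}$ (Proposition~\ref{hashconstruction}) to control first and second moments of the survivor counts, obtain a per-trial failure probability strictly below $1/2$, and amplify over the $T$ independent trials with a Chernoff bound on the median, absorbing the final union bound into the choice of $\alpha^*(c)$.

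Two differences are worth recording. First, on the overestimation side you apply Markov's inequality to the count of survivors of weight strictly above $b_{i-c}$ (expectation $<2^{-c}$), whereas the paper runs Chebyshev on $S_{i-c}$ with $k=3/2$; your variant is simpler and gives a per-trial bound of $1/4$ versus the paper's $4/9$, so the constants work out at least as well. Second, and more substantively, you explicitly treat the clamped case $i+c>n$, which the paper dismisses as ``obvious.'' You are right that it is not: when $b_n>0$, an infeasible system yields $w_i^t=0<b_n$, so one must lower-bound the feasibility probability of a random system $A\sigma=b \bmod 2$; at $i=n$ the expected number of solutions is $1$, and second-moment (Paley--Zygmund) reasoning gives only $\Pr[X_{b_n}\ge 1]\ge 1/(2-2^{-n})$, which tends to $1/2$ and hence cannot supply a Chernoff rate $\gamma(c)>0$ independent of $n$. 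Your proposed fix --- that feasibility holds with probability bounded away from $1/2$ uniformly in $n$ --- is correct: conditioning on $\mathrm{rank}(A)$, the feasibility probability equals $E\left[2^{-\mathrm{corank}(A)}\right]$, which for square random matrices over $\mathbb{F}(2)$ stays above a constant around $0.6$ for all $n$, and having fewer than $n$ constraints only helps. But this fact does not follow from pairwise independence alone; it requires information about the rank distribution of random matrices over $\mathbb{F}(2)$, and you assert it rather than prove it. Held to full rigor, that one step is incomplete in your write-up --- though it is exactly the step the paper itself leaves unproven, so your treatment of the boundary is, if anything, more honest than the original. The core of your argument is correct and matches the paper's.
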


\begin{lemma}
\label{lemma:range-bound}
Let $L' \triangleq b_0 + \sum_{i=0}^{n-1} b_{\min\{i+c+1,n\}}2^i$ and $U' \triangleq b_0 + \sum_{i=0}^{n-1} b_{\max\{i+1-c,0\}}2^i$. Then $U' \leq 2^{2c} L'$.
\end{lemma}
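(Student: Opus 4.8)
The plan is to use only the monotonicity $b_0 \ge b_1 \ge \cdots \ge b_n \ge 0$ together with a shift of the summation index by $2c$, which is exactly what manufactures the factor $2^{2c}$. First I would put both quantities in a common form by the substitution $j = i+1$, so that the geometric coefficient becomes $2^{j-1}$ in both:
\[
U' = b_0 + \sum_{j=1}^n b_{\max\{j-c,0\}}\,2^{j-1}, \qquad L' = b_0 + \sum_{j=1}^n b_{\min\{j+c,n\}}\,2^{j-1}.
\]
In this form the sole difference between the two sums is that $U'$ uses the weight index $j-c$ (clamped below at $0$) whereas $L'$ uses $j+c$ (clamped above at $n$).

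The heart of the argument is that multiplying $L'$ by $2^{2c}$ and reindexing its sum by $i \mapsto i = j - 2c$ converts the weight index $i+c = j-c$ into exactly the one appearing in $U'$, at the price of the coefficient $2^{i-1+2c} = 2^{j-1}$. Concretely, on the interior range $2c+1 \le j \le n$ both clamps are inactive, so the $U'$-term $b_{j-c}\,2^{j-1}$ coincides with the term of $2^{2c}L'$ sitting at position $i = j-2c$, namely $b_{(j-2c)+c}\,2^{(j-2c)-1+2c} = b_{j-c}\,2^{j-1}$. Since these account for only the positions $i = 1, \dots, n-2c$ of the nonnegative sum in $2^{2c}(L'-b_0)$, I get $\sum_{j=2c+1}^n b_{j-c}\,2^{j-1} \le 2^{2c}(L'-b_0)$. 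Whenever a clamp is active, monotonicity finishes the comparison: because $\min\{j-c,n\} \le j-c \le \max\{j-c,0\}$, the $L'$-weight never underestimates the matching $U'$-weight, so the inequality only improves.

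It then remains to absorb the leftover low-order terms of $U'$ — the leading $b_0$ and the block $\sum_{j=1}^{2c} b_{\max\{j-c,0\}}\,2^{j-1}$ — into the single leading term $2^{2c}b_0$ of $2^{2c}L'$. Bounding every leftover weight by $b_0$ gives $b_0 + \sum_{j=1}^{2c} b_{\max\{j-c,0\}}\,2^{j-1} \le b_0 + b_0\sum_{j=1}^{2c}2^{j-1} = b_0\,2^{2c}$, and adding the two pieces yields $U' \le 2^{2c}b_0 + 2^{2c}(L'-b_0) = 2^{2c}L'$. The step I expect to require the most care is the bookkeeping at the two boundaries — the clamping at $0$ and at $n$, together with the stray leading $b_0$ — and verifying that no term of $L'$ is charged twice. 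The degenerate regime $2c \ge n$ is cleanest to dispatch separately: there every weight is at most $b_0$, so $U' \le b_0\,2^n \le 2^{2c}b_0 \le 2^{2c}L'$, using $L' \ge b_0$.
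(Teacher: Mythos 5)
Your proof is correct and follows essentially the same route as the paper's: after reindexing, you match each $U'$ term against the $L'$ term carrying the same weight index (the positional shift by $2c$ is what produces the factor $2^{2c}$), absorb the low-order block into $2^{2c}b_0$ via $b_j \le b_0$, and discard the remaining non-negative terms of $L'$. The only substantive addition is your explicit treatment of the degenerate regime $2c \ge n$, which the paper's proof leaves implicit.
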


\begin{proof}[Proof of Theorem~\ref{maintheorem}]
%
It is clear from the pseudocode of Algorithm~\ref{algo} that it makes $\Theta(n \ln n \ln 1/\delta)$ MAP queries. For accuracy analysis, we can write $W$ as:
\begin{eqnarray*}
W \triangleq \sum_{j=1}^{2^n} w(\sigma_j) = w(\sigma_1) + \sum_{i=0}^{n-1} \sum_{\sigma \in B_i} w(\sigma) \\
\in \left[b_0 + \sum_{i=0}^{n-1} b_{i+1}2^i, b_0 + \sum_{i=0}^{n-1} b_{i}2^i\right]
\triangleq \left[L,U\right]
\end{eqnarray*}
Note that $U \leq 2 L$ because $2 L = 2 b_0 + \sum_{i=0}^{n-1} b_{i+1}2^{i+1} = 2 b_0 + \sum_{\ell=1}^{n} b_{\ell}2^{\ell} = b_0 + \sum_{\ell=0}^{n} b_{\ell}2^{\ell} \geq U$. Hence, if we had access to the true values of all $b_i$, we could obtain a 2-approximation to $W$.

We do not know true $b_i$ values, but Lemma~\ref{lemma:icdf} shows that the $M_i$ values computed by Algorithm~\ref{algo} are sufficiently close to $b_i$ with high probability. Recall that $M_i$ is the median of MAP values computed by adding $i$ random parity constraints and repeating the process $T$ times. Specifically, for $c \geq 2$, it follows from Lemma~\ref{lemma:icdf} that for $0 < \alpha \leq \alpha^*(c)$,
\begin{align*}
\Pr\left[ \bigcap_{i=0}^{n} \left(M_i \in [b_{\min\{i+c,n\}},b_{\max\{i-c,0\}}] \right) \right] \\
\geq 1-n\exp(-\alpha T) \geq (1-\delta)
\end{align*}
for $T = \frac{\log\left(1/\delta\right)}{\alpha} \log n$, and $M_0 = b_0$.
%
Thus, with probability at least $(1-\delta)$ the output of Algorithm \ref{algo}, $M_0 + \sum_{i=0}^{n-1} M_{i+1} 2^i$, lies in the range:
\begin{eqnarray*}
\left[b_0 + \sum_{i=0}^{n-1} b_{\min\{i+c+1,n\}}2^i ,b_0 +
\sum_{i=0}^{n-1} b_{\max\{i+1-c,0\}}2^i  \right]
\end{eqnarray*}
Let us denote this range $[L',U']$.
By monotonicity of $b_i$, $L' \leq L \leq U \leq U'$. Hence, $W \in [L',U']$.

Applying Lemma~\ref{lemma:range-bound}, we have $U' \leq 2^{2c} L'$, which implies that with probability at least $1-\delta$ the output of Algorithm \ref{algo} is a $2^{2c}$ approximation of $W$. For $c=2$, observing that $\alpha^*(2) \geq 0.0042$ (see proof of Lemma~\ref{lemma:icdf}), we obtain a 16-approximation for $0 < \alpha \leq 0.0042$.
\end{proof}

\subsection{Estimating the Tail Distribution}
We can also estimate the entire tail distribution of the weights, defined as 
$
G(u) \triangleq |\{ \sigma \mid w(\sigma) \geq u\}|
$.
\begin{theorem}
\label{theorem:tail}
Let $M_i$ be defined as in Algorithm~\ref{algo}, $u \in \mathbb{R}^+$, and $q(u)$ be the maximum $i$ such that $\forall j \in \{0, \cdots, i\}, M_j \geq u$. Then, for any $\delta > 0$,
with probability $\geq (1-\delta)$, $2^{q(u)}$ is an 8-approximation of $G(u)$ computed using $\mathrm{O}(n \ln n \ln 1/\delta)$ MAP queries.
\end{theorem}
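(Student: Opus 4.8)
The plan is to reuse the high-probability concentration of the medians $M_i$ around the quantiles $b_i$ established in Lemma~\ref{lemma:icdf}, and then translate a two-sided bound on the two ``critical'' coordinates $M_{q(u)}$ and $M_{q(u)+1}$ into a two-sided bound on $G(u)$. First I would fix $c=2$ and the corresponding $\alpha \le \alpha^*(2)$, and condition on the event $\mathcal{E} = \bigcap_{i=0}^{n}\{ M_i \in [b_{\min\{i+c,n\}}, b_{\max\{i-c,0\}}]\}$. Exactly as in the proof of Theorem~\ref{maintheorem}, a union bound over the $n+1$ levels together with Lemma~\ref{lemma:icdf} gives $\Pr[\mathcal{E}] \ge 1 - n\exp(-\alpha T) \ge 1-\delta$ for $T = \Theta(\ln n \ln(1/\delta))$, so that the whole computation uses $(n+1)T = \mathrm{O}(n\ln n\ln(1/\delta))$ MAP queries; everything below is argued on $\mathcal{E}$.

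The next step is to record the elementary dictionary between the weight-ordering of Definition~\ref{quantilesdef} and the tail count $G$: if $u \le b_k$ then $\sigma_1,\dots,\sigma_{2^k}$ all have weight $\ge b_k \ge u$, so $G(u) \ge 2^k$; conversely, if $u > b_k$ then every configuration of weight $\ge u$ must lie among $\sigma_1,\dots,\sigma_{2^k-1}$, so $G(u) \le 2^k$. I would also note the anchor $M_0 = b_0$ (the unconstrained maximum is $w(\sigma_1)$), which makes $q(u)$ well-defined precisely when $u \le b_0$, i.e.\ when $G(u) \ge 1$; the degenerate case $u > b_0$, where $G(u)=0$ and $2^{q(u)}$ is taken to be $0$, is handled separately and trivially.

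With these in hand, write $q = q(u)$. By the definition of $q$ as the longest prefix of levels on which $M_j \ge u$, we have $M_q \ge u$ and, when $q < n$, $M_{q+1} < u$; this uses only these two coordinates and so needs no monotonicity of the $M_i$ sequence. Combining $M_q \ge u$ with $M_q \le b_{\max\{q-c,0\}}$ (from $\mathcal{E}$) yields $u \le b_{\max\{q-c,0\}}$, hence $G(u) \ge 2^{\max\{q-c,0\}} \ge 2^{q-c}$; combining $M_{q+1} < u$ with $M_{q+1} \ge b_{\min\{q+1+c,n\}}$ yields $u > b_{\min\{q+1+c,n\}}$, hence $G(u) \le 2^{\min\{q+1+c,n\}} \le 2^{q+1+c}$. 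For $c=2$ these read $G(u)/8 \le 2^q \le 4\,G(u)$, so both ratios $2^q/G(u)$ and $G(u)/2^q$ are at most $8$, which is the claimed $8$-approximation.

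The steps I expect to require the most care are the boundary/clamping cases. When $q+c+1 > n$ the upper bound cannot use $b_{q+1+c}$ and must instead come from the universal estimate $G(u) \le 2^n = 2^q$; when $q-c < 0$ the lower bound must come from $G(u) \ge 1$ (valid because $u \le b_0$ on the nondegenerate branch), so that $2^q \le 2^c \le 2^c G(u)$ still holds. I would also verify the passage from ``$M_{q+1}<u$, with possible ties among the weights'' to the strict-inequality count $G(u) \le 2^{q+1+c}$, and close off the degenerate $u > b_0$ case. None of these is deep, but each must be checked for the constant to come out exactly $8$.
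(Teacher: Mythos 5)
Your proof is correct, but it takes a genuinely different route from the paper's. The paper does not invoke Lemma~\ref{lemma:icdf} as a black box at all: it re-instantiates the hashing concentration argument directly on the tail set $\{\sigma \mid w(\sigma)\ge u\}$, defining $S_u(h^i_{A,b}) = \sum_{\sigma : w(\sigma)\ge u} 1_{\{A\sigma = b \bmod 2\}}$ and showing via Chebyshev and Chernoff (exactly as inside the proof of Lemma~\ref{lemma:icdf}, but with this set of size $G(u)$ in place of the quantile sets $\mathcal{X}_j$) that $\Pr[M_i \ge u] \ge 1-\exp(-\alpha' T)$ whenever $i \le \lfloor \log_2 G(u)\rfloor - c$, and $\Pr[M_i < u] \ge 1-\exp(-\alpha' T)$ whenever $i \ge \lceil \log_2 G(u)\rceil + c$; a union bound over levels then pins $q(u)$ within $c+1$ of $\log_2 G(u)$, giving the factor $2^{c+1}=8$. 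You instead condition on the global event of Lemma~\ref{lemma:icdf} (every $M_i$ sandwiched between $b_{\min\{i+c,n\}}$ and $b_{\max\{i-c,0\}}$) and convert the two critical coordinates $M_q \ge u > M_{q+1}$ into bounds on $G(u)$ through the deterministic dictionary $u \le b_k \Rightarrow G(u)\ge 2^k$ and $u > b_k \Rightarrow G(u) \le 2^k$. Your reduction requires no new probabilistic work, inherits the constant $\alpha^*(2)$ unchanged, and yields the slightly tighter asymmetric sandwich $G(u)/8 \le 2^{q(u)} \le 4\,G(u)$; its price is the bookkeeping you correctly flag (the anchor $M_0=b_0$, clamping at $i-c<0$ and $i+c+1>n$, ties, and the degenerate case $u>b_0$), all of which you handle or could handle routinely. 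The paper's direct argument works with arbitrary, non-power-of-two tail sizes $G(u)$ without ever referring to the quantiles $b_i$, and so avoids that case analysis, at the cost of repeating the concentration machinery. Both arguments give an 8-approximation with the same $\mathrm{O}(n \ln n \ln 1/\delta)$ query count.
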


While this is an interesting result in its own right, if the goal is to estimate the total weight $W$, then the scheme in Section~\ref{sec:totalweight}, requiring a total of only $\Theta(n \ln n \ln 1/\delta)$ MAP queries, is more efficient than first estimating the tail distribution for several values of $u$.

\subsection{Improving the Approximation Factor}

Given a $\kappa$-approximation algorithm such as Algorithm \ref{algo} and any $\epsilon > 0$, we can design a $(1+\epsilon)$-approximation algorithm with the following construction. Let $\ell = \log_{1+\epsilon} \kappa$. Define a new set of configurations $\Sigma^\ell = \Sigma \times \Sigma \times \cdots \times \Sigma$, and a new weight function $w':\Sigma^\ell \rightarrow \mathbb{R}$ as $w'(\sigma_1, \cdots, \sigma_\ell) = w(\sigma_1) w(\sigma_2) \cdots w(\sigma_\ell)$.

\begin{proposition}
Let $\widehat{W}$ be a $\kappa$-approximation of $\sum_{\sigma' \in \Sigma^\ell} w'(\sigma')$. Then $\widehat{W}^{1/\ell}$ is a $\kappa^{1/\ell}$-approximation of $\sum_{\sigma \in \Sigma} w(\sigma)$.
\end{proposition}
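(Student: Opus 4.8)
The plan is to reduce everything to one clean algebraic identity: the partition function of the product instance is exactly the $\ell$-th power of the original partition function. First I would observe that the weight function $w'$ factorizes across the $\ell$ copies, so the sum over $\Sigma^\ell$ distributes into a product of $\ell$ independent sums:
\[
\sum_{\sigma' \in \Sigma^\ell} w'(\sigma') = \sum_{\sigma_1 \in \Sigma} \cdots \sum_{\sigma_\ell \in \Sigma} \prod_{k=1}^\ell w(\sigma_k) = \prod_{k=1}^\ell \left( \sum_{\sigma_k \in \Sigma} w(\sigma_k) \right) = W^\ell,
\]
where $W = \sum_{\sigma \in \Sigma} w(\sigma)$. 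This is the one substantive step, and it relies only on the product form of $w'$ together with the distributive law.

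Next I would unpack the definition of a $\kappa$-approximation, which in this multiplicative setting means that $\widehat{W}$ satisfies $\frac{1}{\kappa} W^\ell \leq \widehat{W} \leq \kappa W^\ell$. Since $w$ maps into $\mathbb{R}^+$, every quantity in sight is strictly positive, so I may apply the strictly increasing map $t \mapsto t^{1/\ell}$ to each term of the inequality chain without reversing any direction. This yields
\[
\kappa^{-1/\ell} \, W \leq \widehat{W}^{1/\ell} \leq \kappa^{1/\ell} \, W,
\]
which is precisely the claim that $\widehat{W}^{1/\ell}$ is a $\kappa^{1/\ell}$-approximation of $W$.

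There is no genuine obstacle here; the only points requiring care are (i) confirming that the approximation guarantee is read as a two-sided multiplicative bound (as in Theorem~\ref{maintheorem}), so that taking roots preserves both directions simultaneously, and (ii) positivity, which $w : \Sigma \to \mathbb{R}^+$ supplies. To close the loop with the surrounding construction, I would finally remark that the stated choice $\ell = \log_{1+\epsilon} \kappa$ gives $\kappa^{1/\ell} = 1+\epsilon$, so that running Algorithm~\ref{algo} on the product instance $(\Sigma^\ell, w')$ and raising the output to the power $1/\ell$ delivers the desired $(1+\epsilon)$-approximation of $W$.
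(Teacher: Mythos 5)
Your proposal is correct and follows essentially the same route as the paper: establish the identity $W' = \left(\sum_{\sigma \in \Sigma} w(\sigma)\right)^\ell = W^\ell$ via the product structure of $w'$, then take $\ell$-th roots of the two-sided bound $\frac{1}{\kappa} W' \leq \widehat{W} \leq \kappa W'$. You merely spell out the distributive-law step and the monotonicity/positivity justification that the paper leaves implicit, which is fine.
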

To see why this holds, observe that
$
W' = \sum_{\sigma' \in \Sigma^\ell} w'(\sigma') = \left(\sum_{\sigma \in \Sigma} w(\sigma)\right)^\ell = W^\ell
$.
Since $\frac{1}{\kappa} W' \leq \widehat{W} \leq \kappa W'$, we obtain that $\widehat{W}^{1/\ell}$ must be a $\kappa^{1/\ell} = 1+\epsilon$ approximation of $W$.

Note that this construction requires running Algorithm \ref{algo} on an enlarged problem with $\ell$ times more variables. Although the number of optimization queries grows polynomially with $\ell$, increasing the number of variables might significantly increase the runtime.

\subsection{Further Approximations}
When the instances defined in the inner loop are not solved to optimality, Algorithm \ref{algo} still provides approximate \emph{lower bounds} on $W$ with high probability.

\begin{theorem}
\label{approxLB}
Let $\widetilde{w}_i^t$ be suboptimal solutions for the optimization problems in Algorithm \ref{algo}, i.e., $\widetilde{w}_i^t \leq w_i^t$. Let $\widetilde{W}$ be the output of Algorithm \ref{algo} with these suboptimal solutions.
Then, for any $\delta > 0$, with probability at least $1-\delta$, $\frac{\widetilde{W}}{16} \leq W$. 

Further, if $\widetilde{w}_i^t \geq \frac{1}{L}  w_i^t$ for some $L > 0$, then with probability at least $1 - \delta$, $\widetilde{W}$ is a $16 L$-approximation to $W$.
\end{theorem}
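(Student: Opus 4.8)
The plan is to reduce both claims to the 16-approximation guarantee already established in Theorem~\ref{maintheorem}, by exploiting the monotonicity of the median and of the weighted sum that forms the output. The crucial observation is that the relationship between the suboptimal output $\widetilde{W}$ and the exact output is purely deterministic, so the high-probability event over the random hash functions can be borrowed verbatim from Theorem~\ref{maintheorem}. First I would introduce $W^\star = M_0 + \sum_{i=0}^{n-1} M_{i+1} 2^i$, the output produced when Algorithm~\ref{algo} uses the exact optima $w_i^t$, and let $\widetilde{M}_i = \mathrm{Median}(\widetilde{w}_i^1,\dots,\widetilde{w}_i^T)$ be the medians of the suboptimal solutions. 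The key fact I would record is that the median is monotone under coordinatewise domination: if $a_t \le b_t$ for every $t$, then $\mathrm{Median}(a_1,\dots,a_T) \le \mathrm{Median}(b_1,\dots,b_T)$, since each order statistic of $(a_t)$ is dominated by the corresponding order statistic of $(b_t)$. Applying this with $a_t = \widetilde{w}_i^t \le w_i^t = b_t$ yields $\widetilde{M}_i \le M_i$ for every $i$, and because the output is a nonnegative-weighted sum of these medians (with weights $1, 2^0, 2^1, \dots$), I obtain the pointwise inequality $\widetilde{W} \le W^\star$, valid for every realization of the hash functions.

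For the first claim I would then invoke Theorem~\ref{maintheorem}: with probability at least $1-\delta$ over the hash functions, $W^\star$ is a $16$-approximation of $W$, so in particular $W^\star \le 16\,W$ on that event. Chaining gives $\widetilde{W} \le W^\star \le 16\,W$ with probability at least $1-\delta$, i.e.\ $\widetilde{W}/16 \le W$, which is exactly the approximate lower-bound statement.

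For the second claim, the additional hypothesis $\widetilde{w}_i^t \ge \tfrac{1}{L} w_i^t$ combined with the same monotonicity (now scaling by the positive constant $1/L$, which commutes with the median) gives $\widetilde{M}_i \ge \tfrac{1}{L} M_i$ and hence the pointwise lower bound $\widetilde{W} \ge \tfrac{1}{L} W^\star$. On the same probability-$(1-\delta)$ event we also have $W^\star \ge W/16$, so $\widetilde{W} \ge \tfrac{1}{16L} W$. Noting that the two hypotheses $\tfrac{1}{L} w_i^t \le \widetilde{w}_i^t \le w_i^t$ force $L \ge 1$ (whenever some optimum is positive, the degenerate case $W=0$ being trivial), the first part's bound upgrades to $\widetilde{W} \le 16\,W \le 16L\,W$. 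Combining the two directions, $\widetilde{W}$ is a $16L$-approximation of $W$ with probability at least $1-\delta$.

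I do not anticipate a genuine obstacle: the whole argument is a deterministic sandwiching of $\widetilde{W}$ around $W^\star$ followed by a direct appeal to Theorem~\ref{maintheorem}. The only points requiring care are stating the median-monotonicity fact correctly (including its scaling version) and observing that the consistency of the two hypotheses forces $L \ge 1$, which is precisely what makes the upper bound $\widetilde{W} \le 16L\,W$ legitimate.
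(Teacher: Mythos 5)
Your proposal is correct and takes essentially the same route as the paper's proof: a deterministic sandwiching of $\widetilde{W}$ via monotonicity of the median-based output, followed by an appeal to the high-probability event of Theorem~\ref{maintheorem} (the paper phrases that event through the interval $[LB',UB']$ with $UB' \leq 2^{2c}LB'$, while you use the 16-approximation factor directly; these are equivalent). If anything, your write-up is more explicit than the paper's terse argument, spelling out the median-monotonicity and scaling facts and the observation that consistency of the hypotheses forces $L \geq 1$, which the paper uses implicitly for the bound $\widetilde{W} \leq 16L\,W$.
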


The output is always an approximate lower bound, even if the optimization is stopped early. The lower bound is monotonically non-decreasing over time, and is guaranteed to eventually reach within a constant factor of $W$. We thus have an \textbf{anytime} algorithm.

\section{Experimental Evaluation}
We implemented \texttt{WISH} using the open source solver ToulBar2~\cite{toulbar2} to solve the MAP inference problem. ToulBar2 is a complete solver (i.e., given enough time, it will find an optimal solution and provide an optimality certificate), and it was one of the winning algorithms in the UAI-2010 inference competition.
We augmented ToulBar2 with the IBM ILOG CPLEX CP Optimizer 12.3 based techniques borrowed from \citet{gomes2007xors-csp} to efficiently handle the random parity constraints. Specifically, the set of equations $Ax = b \mod{2}$ are linear equations over the field $\mathbb{F}(2)$ and thus allow for efficient propagation and domain filtering using Gaussian Elimination.

For our experiments, we run \texttt{WISH} in parallel using a compute cluster with 642 cores. We assign each optimization instance in the inner loop to one core, and finally process the results when all optimization instances have been solved or have reached a timeout.

For comparison, we consider Tree Reweighted Belief Propagation~\cite{wainwright2003tree} which provides an upper bound on $Z$, Mean Field~\cite{wainwright2008graphical} which provides a lower bound, and Loopy Belief Propagation~\cite{murphy1999loopy} which provides an estimate with no guarantees. We use the implementations of these algorithms available in the LibDAI library~\cite{mooij2010libdai}.

\subsection{Provably Accurate Approximations}

For our first experiment, we consider the problem of computing the partition function, $Z$ (cf.~Eqn.~(\ref{partfunc})), of random Clique-structured Ising models on $n$ binary variables $x_i \in \{0,1\}$ for $i\in\{1,\cdots,n\}$. The interaction between $x_i$ and $x_j$ is defined as $\psi_{ij}(x_i,x_j) =\exp(-w_{ij}) $ when $x_i \neq x_j$, and $1$ otherwise, where $w_{ij}$ is uniformly sampled from $[0,w\sqrt{|i-j|}\,]$ and $w$ is a parameter set to $0.2$. We further inject some structure by introducing a closed chain of strong repulsive interactions uniformly sampled from $[-10 w,0]$. We consider models with $n$ ranging from 10 to 60. These models have treewidth $n$ and can be solved exactly (by brute force) only up to about $n=25$ variables.


Figure \ref{fig:clique.plot} shows the results using various methods for varying problem size. We also computed ground truth for $n \leq 25$ by brute force enumeration. While other methods start to diverge from the ground truth at around $n=25$, our estimate, as predicted by Theorem \ref{maintheorem}, remains very accurate, visually overlapping in the plot. The actual estimation error is much smaller than the worst-case factor of 16 guaranteed by Theorem \ref{maintheorem}, as in practice over- and under-estimation errors tend to cancel out. For $n>25$ we don't have ground truth, but other methods fall \emph{well outside} the provable interval provided by \texttt{WISH}, reported as an error bar that is very small compared to the magnitude of errors made by the other methods.

All optimization instances generated by \texttt{WISH} for $n \leq 60$ were solved (in parallel) to optimality within a timeout of $8$ hours, resulting in high confidence tight approximations of the partition function. We are not aware of any other practical method that can provide such guarantees for counting problems of this size, i.e., a weighted sum defined over $2^{60}$ items.

\subsection{Anytime Usage with Suboptimal Solutions}
Next, we investigate the quality of our results when not all of the optimization instances can be solved to optimality because of timeouts, so that the strong theoretical guarantees of Theorem \ref{maintheorem} do not apply (although Theorem \ref{approxLB} still applies). We consider $10\times10$ binary Grid Ising models, for which ground truth can be computed using the junction tree method~\cite{wainwright2008graphical}. We use the same experimental setup as \citet{hazan2012partition}, who also use random MAP queries to derive bounds (without a tightness guarantee) on the partition function. 
Specifically, we have $n=100$ binary variables $x_i \in \{-1,1\}$ with interaction $\psi_{ij}(x_i,x_j) = \exp(w_{ij} x_i x_j)$. For the attractive case, we draw $w_{ij}$ from $[0,w]$; for the mixed case, from $[-w,w]$. The ``local field'' is $\psi_{ij}(x_i)=\exp(f_i x_i)$ where $f_i$, the strength at site $i$, is sampled uniformly from $[-f,f]$, where $f$ is a parameter with value 0.1 or 1.0.

Figure \ref{GridErrors} reports the estimation \emph{error} for the log-partition function, when using a timeout of $15$ minutes. We see that \texttt{WISH} provides accurate estimates for a wide range of weights, often improving over all other methods. The slight performance drop of \texttt{WISH} for coupling strengths $w \approx 1$ appears to occur because in that weight range the terms corresponding to $i \approx n/2$ parity constraints are the most significant in the output sum $M_0 + \sum_{i=0}^{n-1} M_{i+1} 2^i$. Empirically, optimization instances with roughly $n/2$ parity constraints are often the hardest to solve, resulting in possibly a significant underestimation of the value of $W = Z$ when a timeout occurs.
We do not directly compare with the work of~\citet{hazan2012partition} as we did not have access to their code. However, a visual look at their plots suggests that \texttt{WISH} would provide an improvement in accuracy, although with longer runtime.

\begin{figure*}[htb]
    	\subfigure[Attractive. Field $0.1$.]{\label{gr5fr}\includegraphics[width=0.35\textwidth]{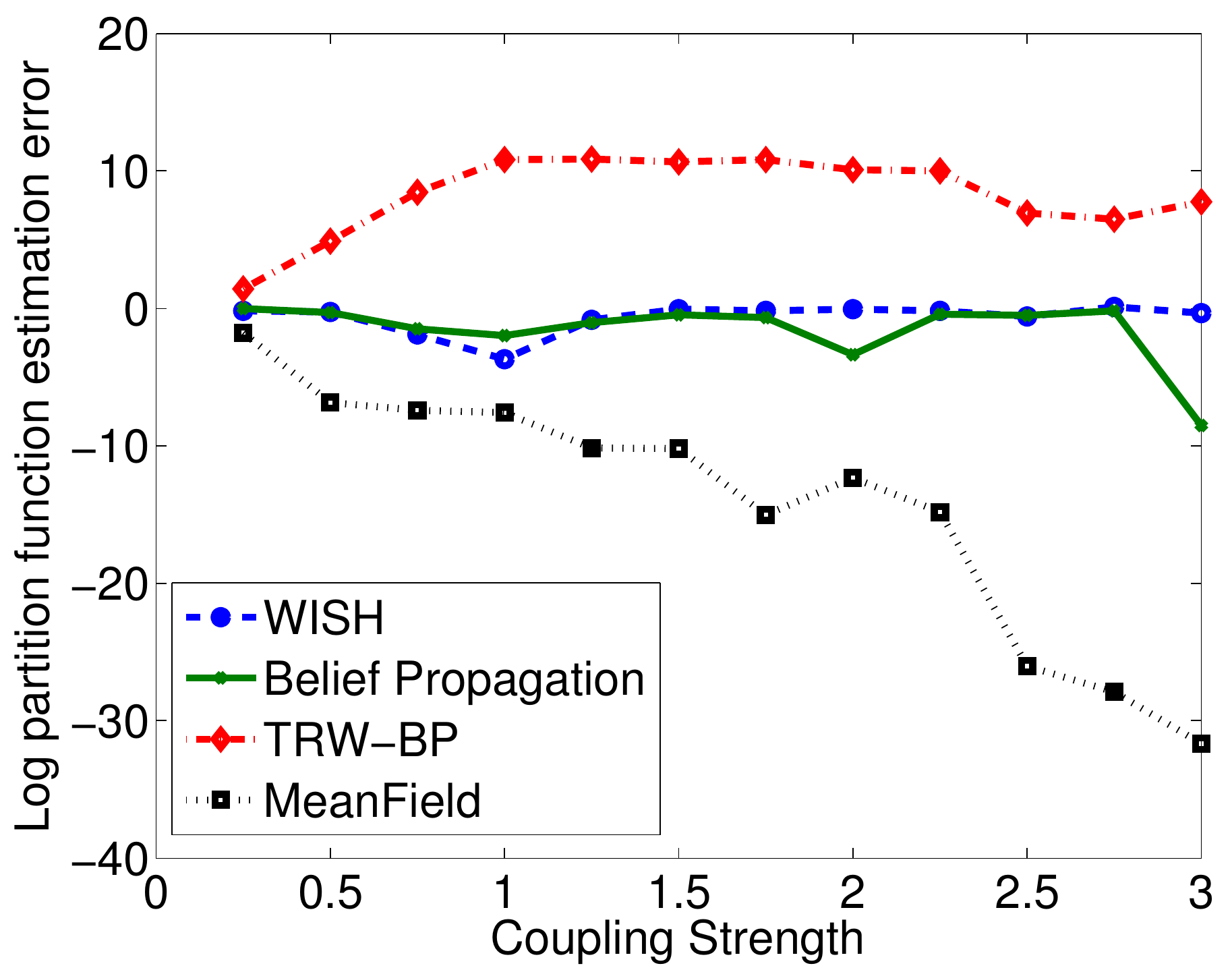}}
    	\hfill
    	\subfigure[Attractive. Field $1.0$.]{\label{g5sp}\includegraphics[trim=25 0 00 0, clip, width=0.35\textwidth]{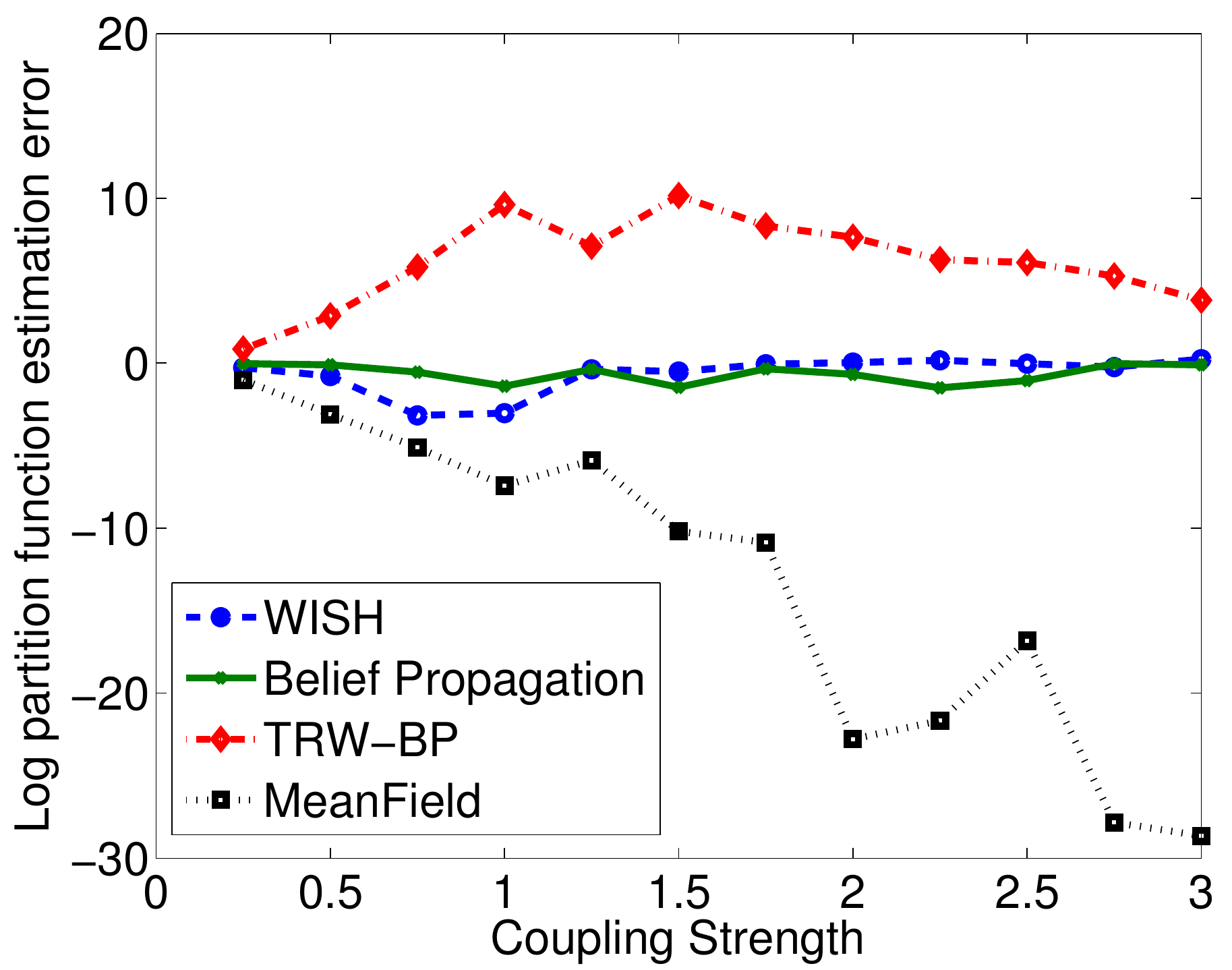}}
    	\\
	\subfigure[Mixed. Field $0.1$.]{\label{cl5fr}\includegraphics[trim=25 0 00 0, clip, width=0.35\textwidth]{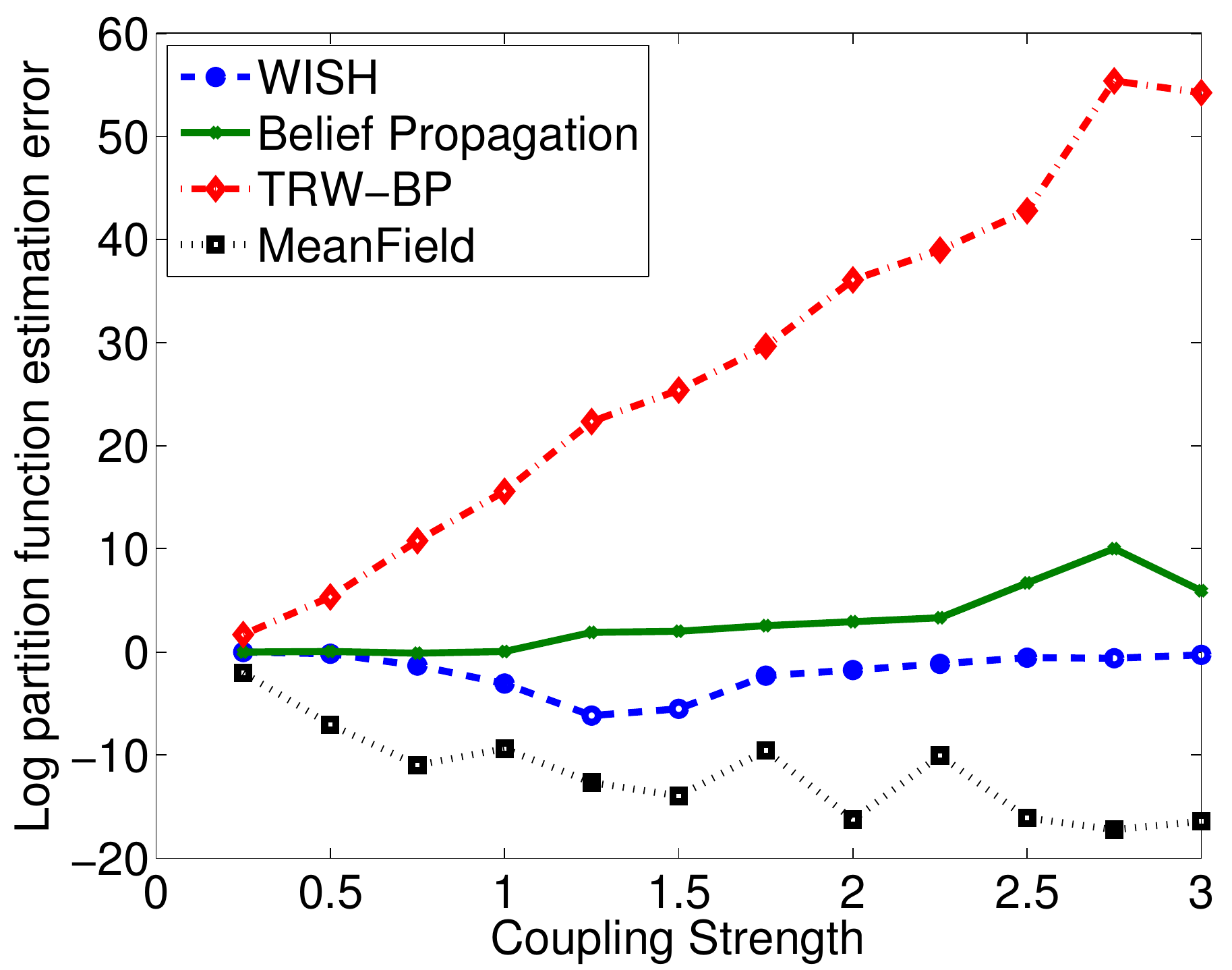}}
    	\hfill
   	\subfigure[Mixed. Field $1.0$.]{\label{cl5sp}\includegraphics[trim=25 0 00 0, clip, width=0.35\textwidth]{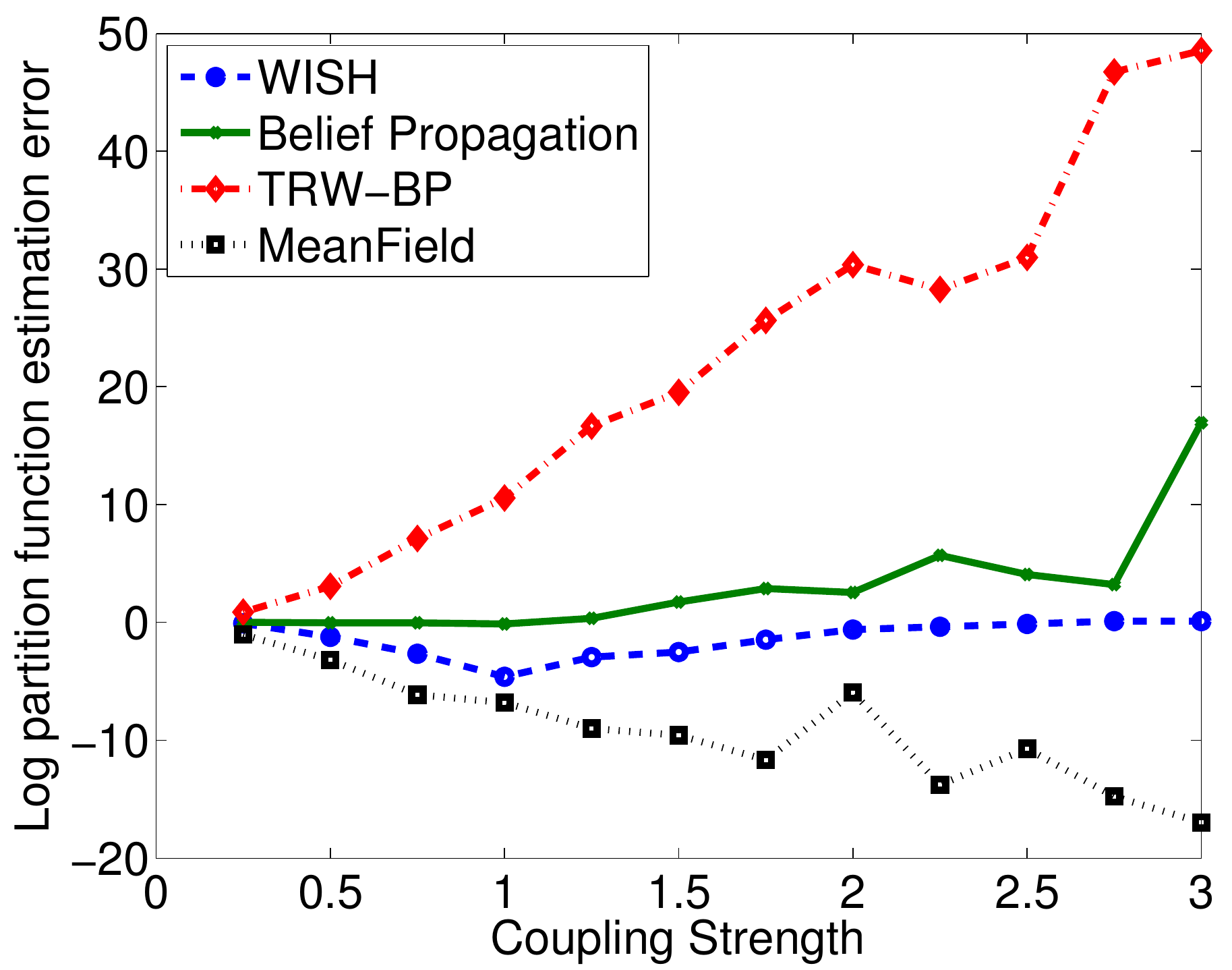}}
  	\caption{Estimation errors for the log-partition function on $10\times10$ randomly generated Ising Grids.}
     \label{GridErrors}
\end{figure*}

\begin{figure*}
\hfill
\subfigure[Log parition function for cliques.]{
		\includegraphics[scale=0.35]{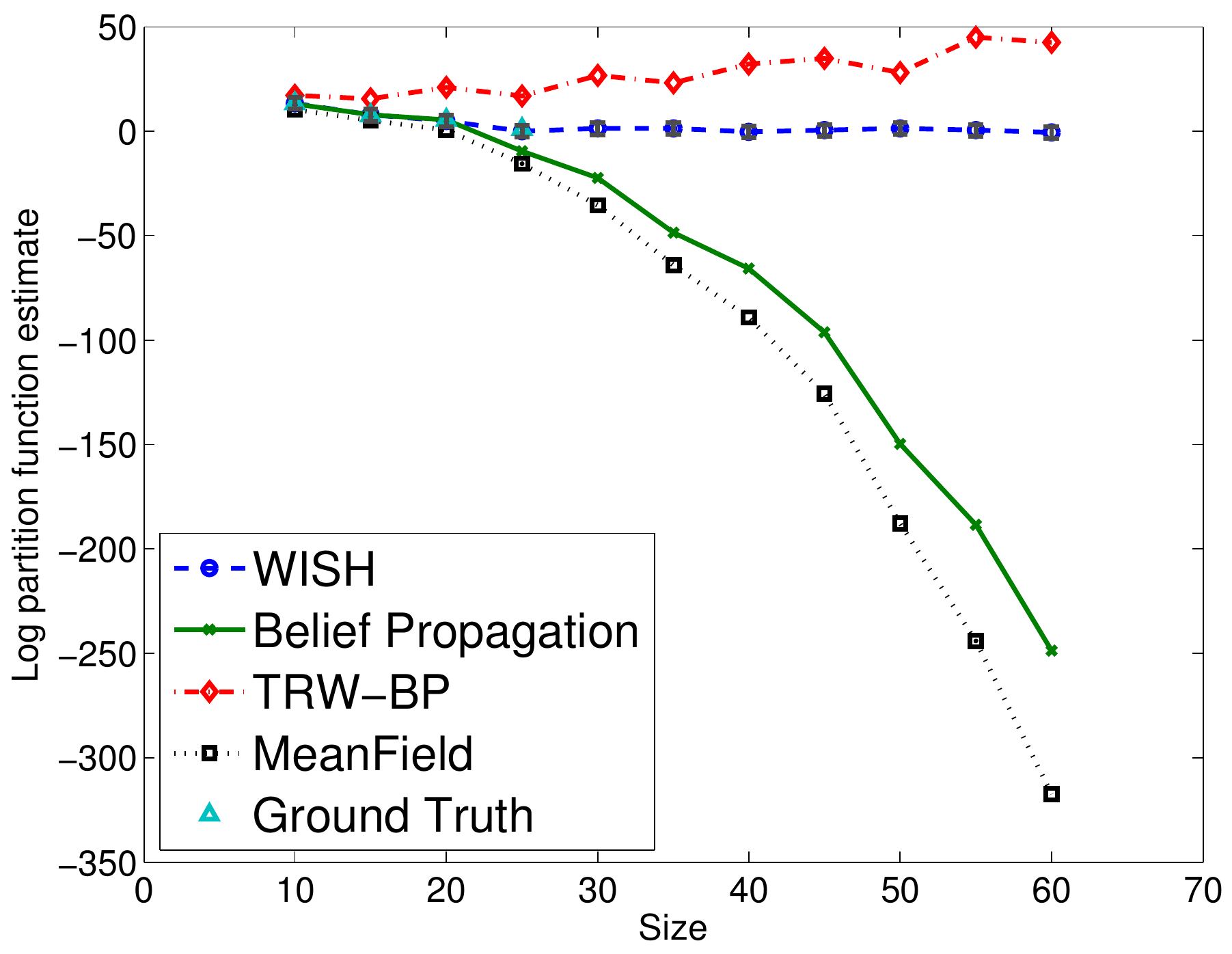}
	\label{fig:clique.plot}
}
\hfill
\subfigure[Sudoku puzzle.]{
\resizebox{0.33\columnwidth}{!}{
\begin{tabular}[b]{!{\vrule width 0.05cm}p{0.1cm}|p{0.1cm}|p{0.1cm}!{\vrule width 0.05cm}p{0.1cm}|p{0.1cm}|p{0.1cm}!{\vrule width 0.05cm}p{0.1cm}|p{0.1cm}|p{0.1cm}!{\vrule width 0.05cm}}
\noalign{\hrule height 0.05cm}
\textcolor{red}{1} & \textcolor{red}{2} & \textcolor{red}{3} & & & & & &\\
\hline
\textcolor{red}{4} & \textcolor{red}{5} & \textcolor{red}{6} & & & & & &\\
\hline
\textcolor{red}{7} & \textcolor{red}{8} & \textcolor{red}{9} & & & & & &\\
\noalign{\hrule height 0.05cm}
 &  &  & & & & & &\\
\hline
 &  &  & & & & & &\\
\hline
 &  &  & & & & & &\\
\noalign{\hrule height 0.05cm}
 &  &  & & & & & &\\
\hline
 &  &  & & & & & &\\
\hline
 &  &  & & & & & &\\
\noalign{\hrule height 0.05cm}
\end{tabular}
}
\label{SudoGrid}
}
\\
\centering
\subfigure[Confabulations from RBM models.]{
		\includegraphics[scale=1.1]{./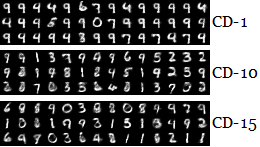}
	\label{SampleDigits}
	}
\caption{Clique Ising models, hard combinatorial structure of Sudoku, and Model Selection for hand-written digits.}
\end{figure*}

\subsection{Hard Combinatorial Structures}
An interesting and combinatorially challenging graphical model arises from Sudoku, which is a popular number-placement puzzle where the goal is to fill a $9 \times 9$ grid (see Figure \ref{SudoGrid}) with digits from $\{1,\cdots,9\}$ so that the entries in each row, column, and $3\times3$ block composing the grid, are all distinct. The puzzle can be encoded as a graphical model with $81$ discrete variables with domain $\{1,\cdots,9\}$, with potentials
$\psi_\alpha(\{x\}_\alpha)=1$ if and only if all variables in $\{x\}_\alpha$ are different, and $\alpha \in \mathcal{I}$ where $\mathcal{I}$ is an index set containing the subsets of variables in each row, column, and block. This defines a uniform probability distribution over all valid complete Sudoku grids (a non-valid grid has probability zero), and the normalization constant $Z_s$ equals the total number of valid grids. It is known that $Z_s = 6.671 \times 10^{21}$. This number was computed exactly with a combination of computer enumeration and clever exploitation of properties of the symmetry group~\cite{felgenhauer2005enumerating}. Here, we attempt to approximately compute this number using the general-purpose scheme \texttt{WISH}.

First, following \citet{felgenhauer2005enumerating}, we simplify the problem by fixing the first block as in Figure \ref{SudoGrid}, obtaining a new problem over $72$ variables whose normalization constant is $Z' = Z_s/9! \approx 2^{54}$. Next, since we are dealing with a feasibility rather than optimization problem, we replace ToulBar2 with CryptoMiniSAT~\cite{soos2009extending}, a SAT solver designed for unweighted cryptographic problems and which natively supports parity constraints. We observed that \texttt{WISH} can consistently find solutions ($60\%$ of the times) after adding $52$ random parity constraints, while for $53$ constraints the success rate drops below $0.5$, at $45\%$. Therefore $M_i = 1$ in Algorithm~\ref{algo} for $i \leq 52$ and there should thus be at least $2^{52} \cdot 9! \approx 1.634 \times 10^{21}$ solutions to the Sudoku puzzle. Although Theorem \ref{maintheorem} cannot be applied due to timeouts for larger values of $i$, this estimate is clearly very close to the known true count. In contrast, the simple ``local reasoning'' done by variational methods is not powerful enough to find even a single solution. Mean Field and Belief Propagation report an estimated solution count of $\exp(-237.921)$ and $\exp(-119.307)$, resp., on a relaxed problem where violating a constraint gives a penalty $\exp(-10)$. 

\subsection{Model Selection}

Many inference and learning tasks require computing the normalization constant of graphical models. For instance, it is needed to evaluate the likelihood of observed data for a given model. This is necessary for Model Selection, i.e., to rank candidate models, or to trigger early stopping during training when the likelihood of a validation set starts to decrease, in order to avoid overfitting~\cite{NIPS2011_1350}.

We train Restricted Boltzmann Machines (RBM) \cite{hinton2006fast} using Contrastive Divergence (CD)~\cite{welling2002new,carreira2005contrastive} on MNIST hand-written digits dataset. In an RBM there is a layer of $n_h$ hidden binary variables $h=h_1, \cdots, h_{n_h}$ and a layer of $n_v$ binary visible units $v=v_1, \cdots, v_{n_v}$. The joint probability distribution is given by
$
P(h,v) = \frac{1}{Z} \exp(b'v+c'h+h'Wv)
$.
We use $n_h=50$ hidden units and $n_v=196$ visible units. We learn the parameters $b,c,W$ using CD-$k$ for $k \in \{1,10,15\}$, where $k$ denotes the number of Gibbs sampling steps used in the inference phase, with $15$ training epochs and minibatches of size $20$.

Figure~\ref{SampleDigits} depicts confabulations (samples generated with Gibbs sampling) from the three learned models. To evaluate the loglikelihood of the data and determine which model is the best, one needs to compute $Z$. We use \texttt{WISH} to estimate this quantity, with a timeout of $10$ minutes, and then rank the models according to the average loglikelihood of the data.
The scores we obtain are $-41.70, -40.35, -40.01$ for $k=1,10,15$, respectively (larger scores means higher likelihood). In this case ToulBar2 was not able to prove optimality for all instances, so only Theorem \ref{approxLB} applies to these results. Although we do not have ground truth, it can be seen that the ranking of the models is consistent with what visually appears closer to a large collection of hand-written digits in Figure \ref{SampleDigits}. Note that $k=1$ is clearly not a good representative, because of the highly uneven distribution of digit occurrences. The ranking of \texttt{WISH} is also consistent with the fact that using more Gibbs sampling steps in the inference phase should provide better gradient estimates and therefore a better learned model. In contrast, Mean Field results in scores $-35.47,-36.08,-36.84$, resp., and would thus rank the models in reverse order of what is visually the most representative order.

\section{Conclusion}
We introduced \texttt{WISH}, a randomized algorithm that, with high probability, gives a constant-factor approximation of a general discrete integral defined over an exponentially large set. \texttt{WISH} reduces the intractable counting problem to a small number of instances of a combinatorial optimization problem subject to parity constraints used as a hash function. In the context of graphical models, we showed how to approximately compute the normalization constant, or partition function, using a small number of MAP queries. Using state-of-the-art combinatorial optimization tools, we are thus able to provide discrete integral or partition function estimates with approximation guarantees at a scale that could till now be handled only heuristically. Finally, our method is a massively parallelizable and anytime algorithm which can also be stopped early to obtain empirically accurate estimates that provide lower bounds with a high probability.


\section*{Acknowledgments}
Supported by NSF Expeditions in Computing grant on
Computational Sustainability \#0832782 and NSF Computing Research Infrastructure
grant \#1059284.

\begin{small}

\end{small}

\appendix
\section{Appendix: Proofs}
\begin{proof}[Proof of Proposition~\ref{hashconstruction}]
Immediately follows from Lemma~\ref{lemma:pairwisehash}.
\end{proof}

\begin{lemma}[pairwise independent hash functions construction]
\label{lemma:pairwisehash}
Let $a \in \{0,1\}^n$, $b\in \{0,1\}$. Then the family $\mathcal{H}=\{h_{a,b}(x): \{0,1\}^n \rightarrow \{0,1\}\}$ where $ h_{a,b}(x)= a \cdot x + b \mod{2}$ is a family of pairwise independent hash functions. The function $h_{a,b}(x)$ can be alternatively rewritten in terms of XORs operations $\oplus$, i.e. $h_{a,b}(x) = a_1 x_1 \oplus a_2 x_2 \oplus \cdots \oplus a_n x_n \oplus b$.
\end{lemma}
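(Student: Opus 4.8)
The plan is to verify directly the two defining conditions of pairwise independence from Definition~\ref{pairwisehash}, specialized to the single-output-bit case $m=1$. The alternative XOR expression is immediate: over $\mathbb{F}(2)$ the inner product $a \cdot x = \sum_i a_i x_i \bmod 2$ equals $a_1 x_1 \oplus \cdots \oplus a_n x_n$, so adding $b$ yields the stated XOR form. It then remains to analyze the distribution of $h_{a,b}$ when $a \in \{0,1\}^n$ and $b \in \{0,1\}$ are drawn independently and uniformly at random, which is precisely what choosing $h_{a,b}$ uniformly from $\mathcal{H}$ means.

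For the first (uniformity) condition, I would fix any $x \in \{0,1\}^n$ and any target $y \in \{0,1\}$ and condition on the value of $a$: once $a$ is fixed, $a \cdot x \bmod 2$ is a determined constant $s \in \{0,1\}$, and $h_{a,b}(x) = s \oplus b$. Since $b$ is uniform on $\{0,1\}$ and independent of $a$, the value $s \oplus b$ is uniform on $\{0,1\}$ for every fixed $s$; averaging over $a$ then gives $\Pr[h_{a,b}(x) = y] = 1/2$. Hence $H(x)$ is uniform on $\{0,1\}$.

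For the second (pairwise independence) condition, take $x_1 \neq x_2$ and set $d = x_1 \oplus x_2 \neq 0$. Writing $u_1 = h_{a,b}(x_1)$ and $u_2 = h_{a,b}(x_2)$, I would apply the invertible $\mathbb{F}(2)$-linear change of coordinates $(u_1, u_2) \mapsto (u_1, u_1 \oplus u_2)$ and show the transformed pair is uniform on $\{0,1\}^2$; since the map is a bijection, this is equivalent to $(u_1, u_2)$ being uniform on $\{0,1\}^2$, which is exactly the joint independence of two uniform bits. Here $u_1 \oplus u_2 = a \cdot (x_1 \oplus x_2) = a \cdot d \bmod 2$ depends only on $a$, while $u_1 = a \cdot x_1 \oplus b$. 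The crux, and the step I expect to be the main obstacle, is establishing that $a \cdot d$ is uniform on $\{0,1\}$: because $d \neq 0$, there is an index $j$ with $d_j = 1$, and fixing all coordinates of $a$ except $a_j$, the sum $a \cdot d$ flips exactly when $a_j$ flips, so as $a_j$ ranges uniformly $a \cdot d$ is uniform. This is the only place the hypothesis $x_1 \neq x_2$ enters. Independence of the two transformed coordinates then follows because $u_1 \oplus u_2$ is a function of $a$ alone, whereas conditioned on any fixed $a$ the bit $u_1 = a \cdot x_1 \oplus b$ is uniform via the independent offset $b$, and therefore independent of $a$. Combining these facts shows that $(u_1, u_1 \oplus u_2)$, and hence $(u_1, u_2)$, is uniform on $\{0,1\}^2$, establishing pairwise independence and completing the verification of both conditions.
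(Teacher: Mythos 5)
Your proof is correct, and it takes a route that differs in its key decomposition from the paper's. The paper writes $H(x_1) = R(S_1 \cap S_2) \oplus R(S_1 \setminus S_2) \oplus b$ and $H(x_2) = R(S_1 \cap S_2) \oplus R(S_2 \setminus S_1) \oplus b$, where $S_1, S_2$ are the supports of $x_1, x_2$, and argues that these three partial sums and $b$ are mutually independent because they involve disjoint sets of the $a_i$; pairwise independence then follows (implicitly via a case analysis on which of $S_1 \setminus S_2$, $S_2 \setminus S_1$ is nonempty). You instead pass to the pair $(u_1, u_1 \oplus u_2)$ via an invertible $\mathbb{F}(2)$-linear change of coordinates, reduce everything to the single statement that $a \cdot d$ is uniform whenever $d = x_1 \oplus x_2 \neq 0$, and get independence from the fact that $u_1 \oplus u_2$ depends on $a$ alone while $u_1$ is uniform conditionally on every value of $a$ (thanks to the offset $b$). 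The two arguments are close in spirit --- your $u_1 \oplus u_2 = a \cdot d$ is exactly the paper's $R(S_1 \setminus S_2) \oplus R(S_2 \setminus S_1) = R(S_1 \triangle S_2)$ --- but your version avoids the paper's case distinction entirely, since the symmetric difference is handled uniformly through the nonzero vector $d$, and it makes explicit the bijection step that the paper leaves as ``this implies that $(H(x_1), H(x_2))$ takes each value in $\{0,1\}^2$ with probability $1/4$.'' The paper's decomposition, on the other hand, generalizes a bit more transparently to showing mutual independence statements about several sums over disjoint index sets. Both are complete and elementary; yours is arguably the tighter write-up of this particular lemma.
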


\begin{proof}
Uniformity is clear because it is the sum of uniform Bernoulli random variables over the field $\mathbb{F}(2)$ (arithmetic modulo $2$).
For pairwise independence, given any two configurations $x_1,x_2 \in \{0,1\}^n$, consider the sets of indexes $S_1 = \{i : x_1(i)=1\}$, $S_2 = \{i : x_2(i)=1\}$. Then
\begin{align*}
H(x_1) &=& \sum_{i \in S_1 \cap S_2} a_i \oplus \sum_{i \in S_1 \setminus S_2} a_i \oplus b \\
&=& R(S_1 \cap S_2) \oplus R(S_1 \setminus S_2) \oplus b \\
H(x_2) 
&=& R(S_1 \cap S_2) \oplus R(S_2 \setminus S_1) \oplus b
\end{align*}
Note that $R(S_1 \cap S_2)$, $R(S_1 \setminus S_2)$, $R(S_2 \setminus S_1)$ and $b$ are independent as they depend on disjoint subsets of independent variables. When $x_1 \neq x_2$, this implies that $(H(x_1),H(x_2) )$ takes each value in $\{0,1\}^2$ with probability $1/4$.
\end{proof}
As pairwise independent random variables are fundamental tools for derandomization of algorithms, more complicated constructions based larger finite fields generated by a prime power $\mathbb{F}(q^k)$ where $q$ is a prime number are known~\cite{vadhan2011pseudorandomness}. These constructions require a smaller number of random bits as input, and would therefore reduce the variance of our algorithm (which is deterministic except for the randomized hash function use).

\begin{proof}[Proof of Lemma~\ref{lemma:icdf}]
The cases where $i+c>n$ or $i-c<0$ are obvious. For the other cases, let's define the set of the $2^j$ heaviest configurations as in Definition \ref{quantilesdef}:
\[
\mathcal{X}_{j} = \{\sigma_1,\sigma_2, \cdots, \sigma_{2^{j}} \}
\]
Define the following random variable
\[
S_j(h^i_{A,b}) \triangleq \sum_{\sigma \in \mathcal{X}_j}  1_{\{A \sigma =b\bmod{2}\}} 
\]
which gives the number of elements of $\mathcal{X}_{j}$ satisfying $i$ random parity constraints. The randomness is over the choice of $A$ and $b$, which are uniformly sampled in  $\{0,1\}^{i\times n}$ and $\{0,1\}^i$ respectively. By Proposition \ref{hashconstruction}, $h^i_{A,b}:\Sigma \rightarrow \{0,1\}^i$ is sampled from a family of pairwise independent hash functions.
Therefore, from the uniformity property in Definition \ref{pairwisehash}, for any $\sigma$ the random variable $1_{\{A \sigma =b\bmod{2}\}}$ is Bernoulli with probability $1/2^i$. By linearity of expectation,
\[
E [ S_j(h^i_{A,b})] = \frac{|\mathcal{X}_{j}|}{2^i} = \frac{2^j}{2^i}
\]
Further, from the pairwise independence property in Definition \ref{pairwisehash},
\begin{align*}
Var [ S_j(h^i_{A,b})] & =  \sum_{\sigma \in \mathcal{X}_j}  Var\left[1_{\{A \sigma =b\bmod{2}\}}\right] \\
   & = \frac{2^j}{2^i} \left(1 - \frac{1}{2^i}\right)
\end{align*}
Applying Chebychev Inequality, we get that for any $k>0$,
\[
\Pr\left[\left|S_{j}(h^i_{A,b})-\frac{2^j}{2^i}\right|> k \sqrt{\frac{2^j}{2^i} \left(1 - \frac{1}{2^i}\right)}\right] \leq \frac{1}{k^2}
\]
Recall the definition of the random variable $w_i = \max_\sigma w(\sigma)$ subject to $A \sigma =b \mod{2}$ (the randomness is over the choice of $A$ and $b$). Then
\[
\Pr[w_i \geq b_j] = \Pr[w_i \geq w(\sigma_{2^{j}})] \geq \Pr[S_{j}(h^i_{A,b}) \geq 1]
\]
which is the probability that at least one configuration from $\mathcal{X}_{j}$ ``survives'' after adding $i$ parity constraints.

To ensure that the probability bound $1/k^2$ provided by Chebychev Inequality is smaller than a $1/2$, we need $k > \sqrt{2}$. We use $k=3/2$ for the rest of this proof, exploiting the following simple observations which hold for $k=3/2$ and any $c \geq 2$:
\begin{align*}
k \sqrt{2^c} & \leq 2^{c}-1 \\
k \sqrt{2^{-c}} & \leq 1 - 2^{-c}
\end{align*}

For $j = i+c$ and $k$ and $c$ as above, we have that
\begin{eqnarray*}
\Pr[w_i \geq b_{i+c}] \geq \Pr[S_{i+c}(h^i_{A,b}) \geq 1] \geq \\
\Pr\left[|S_{i+c}(h^i)-2^c| \leq 2^c - 1\right] \geq \\
\Pr\left[|S_{i+c}(h^i)-2^c| \leq k \sqrt{2^c}\right] \geq \\
\Pr\left[\left|S_{i+c}(h^i_{A,b})-2^c\right|\leq k \sqrt{2^c \left(1 - \frac{1}{2^i}\right)}\right] \geq \\
1-\frac{1}{k^2} = 5/9 > 1/2
\end{eqnarray*}

Similarly, for $j=i-c$ and $k$ and $c$ as above, we have
$\Pr[w_i \leq b_{i-c}]  \geq 5/9 > 1/2$.

Finally, using Chernoff inequality (since $w_i^1, \cdots, w_i^T$ are i.i.d. realizations of $w_i$)
\begin{align}
\Pr\left[M_i \leq b_{i-c} \right] &\geq 1-\exp(-\alpha'(c) T)\\
\Pr\left[M_i \geq b_{i+c} \right] &\geq 1-\exp(-\alpha'(c) T)
\end{align}
where $\alpha'(2)=2(5/9-1/2)^2$, which gives the desired result
\begin{align*}
\Pr\left[b_{i+c} \leq M_i \leq b_{i-c} \right] & \geq 1-2\exp(\alpha'(c) T)\\
 & = 1-\exp(-\alpha^*(c) T)
\end{align*}
where $\alpha^*(2) = \ln 2 \alpha'(2) = 2(5/9-1/2)^2\ln 2 > 0.0042$
\end{proof}

\begin{proof}[Proof of Lemma~\ref{lemma:range-bound}]
Observe that we may rewrite $L'$ as follows:
\begin{eqnarray*}
L' = b_0 + \sum_{i=n-c-1}^{n-1} b_{n}2^i + \sum_{i=0}^{n-c-2} b_{i+c+1}2^i = \\
b_0 + \sum_{i=n-c-1}^{n-1} b_{n}2^i + \sum_{j=c+1}^{n-1} b_{j}2^{j-c-1}
\end{eqnarray*}
Similarly,
\begin{eqnarray*}
U' = b_0 + \sum_{i=0}^{c-1} b_{0}2^i+ \sum_{i=c}^{n-1} b_{i+1-c}2^i = \\
b_0 + \sum_{i=0}^{c-1} b_{0}2^i+ \sum_{j=1}^{n-c} b_{j}2^{j+c-1} =
  2^c b_0 + 2^c \sum_{j=1}^{n-c} b_{j}2^{j-1} = \\
  2^c b_0 + 2^c \left(\sum_{j=1}^{c} b_{j}2^{j-1} + \sum_{j=c+1}^{n-c} b_{j}2^{j-1}\right) \leq \\
  2^c b_0 + 2^c \left(\sum_{j=1}^{c} b_{0}2^{j-1} + \sum_{j=c+1}^{n-c} b_{j}2^{j-1}\right) = \\
   2^{2c} b_0 + 2^{2c} \sum_{j=c+1}^{n-c} b_{j}2^{j-1-c} \leq \\
  2^{2c} \left(b_0 + \sum_{i=n-c-1}^{n-1} b_{n}2^i + \sum_{j=c+1}^{n-1} b_{j}2^{j-c-1} \right)
  = 2^{2c} L'
\end{eqnarray*}
This finishes the proof.
\end{proof}

\begin{proof}[Proof of Theorem~\ref{theorem:tail}]
As in the proof of Lemma \ref{lemma:icdf}, define the random variable
\[
S_u(h^i_{A,b}) \triangleq \sum_{\sigma \in \{\sigma \mid w(\sigma) \geq u\}}  1_{\{A \sigma =b\bmod{2}\}} 
\]
that gives the number of configurations with weight at least $u$ satisfying $i$ random parity constraints. Then for $i \leq \lfloor \log G(u) \rfloor -c\leq \log G(u) -c$ using Chebychev and Chernoff inequalities as in Lemma \ref{lemma:icdf}
\[
\Pr\left[M_i \geq u \right] \geq 1-\exp(-\alpha' T)
\]
For $i \geq \lceil \log G(u)\rceil +c  \geq \log G(u) +c$, using Chebychev and Chernoff inequalities as in Lemma \ref{lemma:icdf}
\[
\Pr[M_i < u] \geq 1-\exp(-\alpha' T)
\]
Therefore,
\begin{eqnarray*}
\Pr\left[\frac{1}{2^{c+1}} 2^{q(u)} \leq G(u) \leq 2^{c+1} 2^{q(u)}\right] \geq \\
\Pr\left[\bigcap_{i=0}^{\lfloor \log_2 G(u) \rfloor -c} \left(M_i \geq u \right) \bigcap \left(M_{\lceil \log_2 G(u) \rceil +c} < u \right) \right] \geq \\
1-n\exp(-\alpha' T) \geq 1 - \delta
\end{eqnarray*}
This finishes the proof.
\end{proof}

\begin{proof}[Proof of Theorem~\ref{approxLB}]
If  $\widetilde{w}_i^t \leq w_i^t$, from Theorem \ref{maintheorem} with probability at least $1-\delta$ we have $\widetilde{W} \leq M_0 + \sum_{i=0}^{n-1} M_{i+1} 2^i \leq UB'$. Since $\frac{UB'}{2^{2c}}\leq LB' \leq W \leq UB'$, it follows that with probability at least $1-\delta$, $\frac{\widetilde{W}}{2^{2c}} \leq W $.

If $w_i^t \geq \widetilde{w}_i^t \geq \frac{1}{L}  w_i^t$, then from Theorem \ref{maintheorem} with probability at least $1-\delta$ the output is $\frac{1}{L} LB' \leq \widetilde{W} \leq UB'$, and $LB' \leq W \leq UB'$.
\end{proof}

\end{document}